\documentclass[11pt]{amsart}

\usepackage[style=authoryear, backend=biber, natbib=true]{biblatex}
\usepackage{xurl}
\usepackage{hyperref}
\hypersetup{
    colorlinks=true,
    linkcolor=blue,
    filecolor=magenta,      
    urlcolor=cyan,
    pdftitle={MULTI-HEAD SELF ATTENTION IS A PARAMETER IDENTIFICATION MECHANISM},
    pdfpagemode=FullScreen,
}

\usepackage{amssymb}

\usepackage{graphicx} 

\newtheorem{thm}{Theorem}[section]
\newtheorem{prop}[thm]{Proposition}
\newtheorem{lem}[thm]{Lemma}
\newtheorem{cor}[thm]{Corollary}
\newtheorem{fact}[thm]{Fact}

\theoremstyle{definition}

\numberwithin{equation}{section}

\newcommand{\N}{\mathbb{N}}  

\newcommand{\R}{\mathbb{R}}  
\renewcommand{\Rn}[1]{\mathbb{R}^{#1}}  
\newcommand{\Rnxm}[2]{\mathbb{R}^{#1 \times #2}}  

\newcommand{\C}{\mathbb{C}}  

\renewcommand{\vec}[1]{\mathbf{#1}}  

\DeclareMathOperator{\Softmax}{Softmax} 
\newcommand{\SoftmaxCol}{\Softmax_{\mathrm{col}}}

\begin{document}
\title[Multi-Head Attention and Identification]{MULTI-HEAD SELF ATTENTION IS A PARAMETER IDENTIFICATION MECHANISM}

\author{W Ross Morrow}
\email{morrowwr@gmail.com}
\urladdr{wrossmorrow.com}
\urladdr{github.com/wrossmorrow}

\begin{abstract}
We prove that a multi-head scaled dot product attention can be viewed as a parameter identification strategy. The ratio of unidentified parameters to the total number of parameters scales like the reciprocal of the number of heads ($1/2 \to 1/(2H)$), meaning models with more heads are structurally more identified. A subtle side effect of the mathematics observation that attention can never be fully identified. Similarly we also show that some bias terms can have no effect on softmax-based attention layers in both the single- and multiple-head settings, though this is mostly a curiosity that should have a marginal effect on model size and model training/prediction efficiency. We also touch on modern improvements to transformers including RoPE and GQA from this perspective, illustrating how those as well can improve the ratio of ``meaningful'' parameters to all parameters. Simple numerical examples demonstrate that training can indeed involve updates that overlap model-invariant subspaces that arise from a lack of identification. As part of our experiments we use a ``rebalancing'' approach that can ``fix'' updates that overlap unindentified subspaces but do not try to present evidence this should actually be adopted. Instead we simply view our numerical results as exploring and confirming the theoretical results. As a whole we discuss a purely mathematical/statistical explanation, identification, for why specific architectural choices in transformers may have improved performance. 
\end{abstract}

\maketitle

\tableofcontents


\section{Introduction}

A fundamental part of the current success of transformer architectures is the multi-head attention introduced in \citet{vaswani2023}. This work popularized not only using scaled dot products as the basis of a mixing operator to train relevant context for token prediction, but also the branching strategy of using multiple low-rank heads in a batched fashion followed by ``averaging'' of their effects into the embedding dimension. The formal form of these interactions raises a question about {\em parameter identification}: whether all parameters (weights and biases) introduced into a modeling architecture actually represent trainable effects. While sidestepping such discussions is understandable and even probably practical in the face of the complexities of modern neural networks like transformers, using dot products in token mixing is a relatively obvious source of potential reduction in the effective predictive power of each individual parameter. 

In this short note we focus on and examine the interplay between identification and dot product attention. We observe how by multiplying weight matrices together we lose identification, both by demonstrating explicit invariances and identifying where directional derivatives vanish. Importantly these results show that it is impossible for {\em all} query-key value weights to have meaningful effects and, for similar parameter counts, a multi-head attention model has a higher fraction of identified parameters than a single-head attention. Specifically the unidentified fraction of parameters is related to the reciprocal of the number of heads ($1/2 \to 1/(2H)$, though there are other practical and competing consequences to increasing the number of heads). The observation of a symmetry that implies a reduction in identification is not in itself so surprising or novel. However we are unaware of the derived observation that splitting into multiple heads has the byproduct of (perhaps dramatically) improving identification, or the fact that this basic effect is also further enhanced by RoPE and GQA. 

We also undertake and report on some small scale and simple numerical experiments to suggest how this might affect model training in practice. Gradients with respect to weights will avoid symmetries that are connected to unidentified parameters by definition. Better optimizers add in higher order or momentum terms break this symmetry and can overlap invariant subspaces in directional derivatives. Our numerical experiments are mostly pointed at confirmation of the theoretical results, not to proposing specific methods to improve training. We do though present and use a ``rebalancing'' method as part of our confirmations that could be used to ``canonicalize'' weight iterates in training.

Our overall perspective is that identification is not just a theoretical curiosity, nor a classical statistical consideration outdated for deep learning. When parameters or parameter combinations are unidentified, we both store data on disk and in memory that doesn't influence outcomes as well as can train with updates to weights or biases that have reduced (or no) effect on model outputs. For effective and efficient training and inference, modelers should ideally aim to ensure that as many parameters as possible are actually influencing model predictions; in mathematical language we should aim for injective representations over the parameter space. 

In itself this is probably an uncontroversial if idealistic perspective. However architectural innovations in deep networks like transformers have been made that have the byproduct of improving parameter identification, without this being a directly stated motivation for those architectures. Presuming more identification indeed means more predictive capability for the same model size, at least in parameter counts, changing identification raises a confound for {\em what} is making more identified architectures better. ``Structural'' reasons to introduce multiple heads (or RoPE, or GQA) can be valid at the same time as predictive performance is improved simply because more of the parameters have predictive influence or meaning.

Any potential improvements in identification presented by multi-head attention are, surely, not the only mechanism driving model quality improvement. A multi-head attention setup is functionally distinct from a single-head attention, in the sense that these two generate distinct functional classes mapping one set of embedded token sequences to another. Along this line of thinking we also show there is no effective-parameter-comparable single-head attention we can build for a given multi-head attention without lifting to a larger embedding space for the entire model. 

This note starts with a short review of recent literature related to transformers and identification. We follow that with a careful definitions section to properly set the stage for results we derive. The third section focuses on our identification results, particularly that the architecture of multi-head attention alone improves identification. Our final substantive section reviews our numerical experiments to explore and confirm the theory. The note closes with brief review of the main conclusions we draw. 


\section{Related Literature}

Of course the basis for our discussion is the pivotal 2017-but-updated paper \citet{vaswani2023} introducing the main components of self attention and the transformer architecture. Here is the motivation for a multihead attention as stated there:
\begin{quotation}
[W]e found it beneficial to linearly project the queries, keys and values h times with different, learned
linear projections ... Multi-head attention allows the model to jointly attend to information from different representation subspaces at different positions. With a single attention head, averaging inhibits this.
\end{quotation}
Having effectively popularized the architecture enabling much of today's deep networks there have been extensive discussions about, investigations into, and extensions to this architecture. We'll review several related lines of enquiry here.

The author originally learned about transformers and worked on this note through experiments with \texttt{nanoGPT} \citep{karpathy2022}, reading a formal report from Google Brain \citep{phuong2022}, and of course \citet{vaswani2023}. By today our results are not entirely novel nor are really intended to be. \citet{zhao2025} discuss group symmetries in detail identifying a $GL_d(\R)$ symmetry in adjacent linear layers, including self-attention's query-key products, as well as $\Softmax$ invariance under certain translations. Both appear here with constructive proofs. \citet{zhao2025} in fact also elaborate on parameter identifiability and training dynamics, and cite numerous articles where the completeness of symmetry groups with respect to identifiability is known. Our contribution (if any) is to elaborate on how these results specifically relate to the introduction of multiple heads in attention as described in \citet{vaswani2023}, and how by doing so we improve parameter identification ``out of the box'' in a manner counfounding motivation from ``representationality''.

\citet{zhang2025}, motivated by well-known permutation symmetries in MLPs, study rotational symmetries in attention. This is a close subclass of the invariances we study: coarsely, and in our language, they focus on something like half the unidentified degrees of freedom we do. Importantly \cite{zhang2025} don't simply {\em observe} such symmetries but also propose a method to ``quotient out'' the associated invariances in training \citep{zhao2025}. We do something similar with weight rebalancing in AdamW though do not formally propose or analyze this rigorously as a serious method for training. 

Fundamentally multi-head attention achieves higher identification through lowering the rank of any specific attention applied, at least between the queries and keys. \citet{bhojanapalli2020} leverage the ``representation power'' argument for multiple low-rank heads and propose setting the attention head output projection size to the input sequence length (but experiment only within the uppser bound of the related embedding dimension). Their theoretical motivation is a representation theorem stating that to get an arbitrary stochastic output matrix out of an attention head one must choose an inner head dimensionality (here denoted $D$) at least the sequence length (here $E$). They run experiments with multi-head attentions with ``inflated'' output dimensions larger than the layer's embedding dimension divided by the number of heads as in \citet{vaswani2023}. Our results are compatible with improved performance in this architecture and also present a tradeoff: the ``most identified'' extreme of rank-1 heads ($D = 1$) is the least representationally powerful, but the ``most representational'' extreme of full-rank heads ($D = E$) is the most wasteful with parameters, and the original proposal ($D = E/H$ where $H$ divides $E$) effectively trades off these two effects of representational rank and parameter efficiency. 

As of writing many successful transformer models for language also adopt Rotary Positional Encodings (RoPE) following the work in \citet{su2023}. Importantly this technique partially breaks the symmetries in the original multi-head attention. We study a sketch of RoPE, concluding this related architecture again significantly improves identification (``$D^2 \to D$ identified'') by effectively inducing a positional family commutativity constraint. Identification, though, is seemingly not why RoPE was proposed. Much like multi-head attention RoPE appears to have a representational motivation with a very similar side effect on parameter efficiency. There is also recent interest in no positional encoding (NoPE) for learned long-context positional relationships \citep{wang2024} which would recenter the ``$GL_D(\R)$'' idenfication considerations we discuss.

Further enhancements like grouped query attention (GQA) \citep{ainslie2023} or cross attention \citep{brandon2024} have also appeared. With the exception of RoPE or any other modifier ``inside'' query-key products these would change the quantitative, not qualitative, results. Like RoPE we review GQA specifically, and how it can tie together head symmetries to fewer groups to improve identification. 

As a final note related to existing literature, we could also consider any relationship identification may have with the Lottery Ticket Hypothesis (LTH) \citep{frankle2019}. Experiments in that work suggest far fewer meaningful parameters than we formally show are identified. Specifically our results suggest for some characterizations of now-plain multi-head attention networks about 90\% of parameters have ``meaning'', whereas the LTH results are more like 10-20\%. This misalignment means {\em should} a lack of identification alone be a prominent feature in the LTH many more sources are missing. 


\section{Definitions}

Let’s say we have inputs $\vec{x} \in \{1, \dotsc , T\}^L \subset \Rn{L}$ where $L \in \N$ is a uniform token sequence length and $\{1, \dotsc , T\} \subset \N$ is the token vocabulary (or indices of them), and embed tokens as vectors in $\Rn{E}$ for some $E \in \N$. We focus on attention heads motivated by the \citet{vaswani2023}-popularized scaled dot product formulation stylized as
\begin{equation*}
	\vec{V} \Softmax \left( \vec{K}^\top\vec{Q} \right)
\end{equation*}
of attention with query-key-value structure as a basic layer unit. Note here we aren’t using the row-centric standards common in deep learning, in which some “broadcast” logic is more natural, but rather a maybe more mathematical column vector orientation. This is an equivalent representation with the right transposing that is a bit more consistent with general linear algebra knowledge but mainly just more comfortable for the author. We also exclude a scaling factor, as this is a computational convenience with no unique impact on theory, as we can just absorb scaling into the weights for our purposes. We describe the details of how this layer works in this section to provide us with the foundation for analysis.

Just for clarity, denoting a generic \texttt{pytorch}-style linear layer as 
\begin{equation*}
	\mathcal{L}(\vec{X}|\vec{W},\vec{b}) = \vec{X}\vec{W}^\top + \vec{1}\vec{b}^\top
\end{equation*}
we would implement
\begin{equation*}
	\Softmax_d \left( \mathcal{L}_Q(\vec{X})\mathcal{L}_K(\vec{X})^\top \right) \mathcal{L}_V(\vec{X})
\end{equation*}
The literal inputs $\vec{X}$ could differ or be batched (or both), as long as the dimensions work out. Perhaps importantly though the \texttt{pytorch} form has exactly the same inner matrix products we identify invariances in and require no modifications to understand. 

\subsection{Embeddings.}

Our first step, token embedding, actually occurs before attention and any layers: we are given a “table” $\vec{e} : \{1, \dotsc , T\} \to \Rn{E}$ that maps tokenized sequences into a vector space for some $E \in \N$, and construct a matrix representation of the input sequence as 
\begin{equation*}
	\vec{X} = \begin{pmatrix}
		\vert & & \vert \\
		\vec{e}(x_1) & \dotsb & \vec{e}(x_L) \\
		\vert & & \vert \\
	\end{pmatrix} \in \Rnxm{E}{L}
\end{equation*}
This embedding can be trained, or it can provide a level of pre-``trained'' semantic similarity between tokens for words or word parts. Regardless we generally consider inputs $\vec{X} \in \Rnxm{E}{L}$ of embedded tokens throughout layers. There are other properties satisfied by inputs to attention heads, such as layer normalization, that we won’t consider here. 

\subsection{Queries, Keys, and Values.}

With values in embedding space and choosing some $D \in \N$ we compute “queries” “keys” and “values” $\vec{Q}, \vec{K}, \vec{V}$ from linear (possibly affine) layers: 
\begin{align*}
	&\vec{Q}(\vec{X}) = \vec{W}_Q \vec{X} \in \Rnxm{D}{L} \\
	&\vec{K}(\vec{X}) = \vec{W}_K \vec{X} \in \Rnxm{D}{L} \\
	&\vec{V}(\vec{X}) = \vec{W}_V \vec{X} \in  \Rnxm{E}{L} \\
	&\text{where} \;\; \vec{W}_Q,\vec{W}_K \in  \Rnxm{D}{E} , \vec{W}_V \in \Rnxm{E}{E}
\end{align*}
Later we’ll consider bias terms, but for now we ignore them. Note also that 
\begin{equation*}
	\vec{Q}(\vec{X})
	= \vec{W}_Q\vec{X}
	= \begin{pmatrix}
		\vert & & \vert \\
		\vec{W}_Q\vec{x}_1 & \dotsb & \vec{W}_Q\vec{x}_L \\
		\vert & & \vert \\
	\end{pmatrix} \in \R^{E \times L}
\end{equation*}
(similarly for $\vec{K}(\vec{X})$ and $\vec{V}(\vec{X})$) and thus column ordering in queries, keys, and values is preserved relative to positional ordering in any original input sequence $\vec{x}$ (prior to embedding to lift up to $\vec{X}$ which also preserves ordering). Effectively, the keys and queries “project” embeddings into $\Rn{D}$ for some $D$, whereas the values (in a single head) are functionally a post-attention linear layer. 

We've assumed the value layer is ``square'' ($E \times E$) instead or rectangular, and ignore biases here which are easy to add back. The setup described here has $2DE + E^2$ parameters. If $D = E$ we have $3E^2$ parameters but the queries and keys can be low rank with $D < E$. These symbols and counts will get slightly more complicated for layers that ``lift'' or ``contract'' the embedding dimension, but not significantly meaningful way.

\subsection{Probabilities.}

Using a query-key multiplication as above is sufficient to effect ``embedded token mixing'' interpreted as any operation across tokens in a sequence. More commonly, transformer architectures also take a $\Softmax$ over columns of the query-key product matrix: 
\begin{equation*}
	\SoftmaxCol \left( \vec{K}(\vec{X})^\top \vec{Q}(\vec{X})\right) \in \Rnxm{L}{L}
\end{equation*}
again ignoring scaling, forming a set of column-specific probabilities. For ``causal'' modeling we would also include a mask whose effect is to make the $\Softmax$ respect ordering in the token embeddings. Formally, the $(d,\ell)$th element is defined as 
\begin{equation*}
	\SoftmaxCol \left( \vec{K}(\vec{X})^\top \vec{Q}(\vec{X})\right) _{d,l}
		= \frac{ \exp\{\vec{k}_d^\top\vec{q}_\ell\}  }{ \sum_c \exp\{\vec{k}_c^\top\vec{q}_\ell\} }
\end{equation*}
for some sum over $c$. The result is a left-stochastic matrix, i.e. one with positive entries whose column sums are all one. 

Left multiplication by $\vec{X}$ forms convex combinations of the columns of $\vec{X}$ in the result: 
\begin{equation*}
	\begin{pmatrix}
		\vert & \vert & & \vert \\
		\sum_{c=1}^L p_{c,1} \vec{x}_c & \sum_{c=1}^L p_{c,2} \vec{x}_c & \dotsb & \sum_{c=1}^L p_{c,L} \vec{x}_c \\
		\vert & \vert & & \vert \\
	\end{pmatrix}
\end{equation*}
where for simplicity we denote the $\SoftmaxCol$ results as $p$ (for “probability” of course). A “causal” attention, wherein we “mask” out any “future” embeddings when computing probabilities, results in a form 
\begin{equation*}
	\begin{pmatrix}
		\vert & \vert & & \vert \\
		\vec{x}_1 & \sum_{c=1}^2 p_{c,2} \vec{x}_c & \dotsb & \sum_{c=1}^L p_{c,L} \vec{x}_c \\
		\vert & \vert & & \vert \\
	\end{pmatrix}
\end{equation*}
embodied by a stochastic upper triangular $\SoftmaxCol$ matrix, with entries depending only on forward relationships derived from sequence ordering. We’ll leave masking out of the notation most of the time, as what we discuss will not depend on masking. 

Finally, note we are not considering “cross attentions” wherein the queries and keys are determined by different inputs, as might be useful in translation oriented encoder-decoder networks. Formally, we’re focusing on “self attention” layers, though our basic results about identification in matrix products hold for cross attention as well because they don't relate to the inputs.

\subsection{Value Weighting.}

We then compose those probabilities with a linear layer using $\vec{V}(\vec{X})$, 
\begin{equation*}
	\vec{V}(\vec{X}) \SoftmaxCol \left( \vec{K}(\vec{X})^\top \vec{Q}(\vec{X})\right)
		= \vec{W}_V\vec{X} \SoftmaxCol \left( \vec{K}(\vec{X})^\top \vec{Q}(\vec{X})\right)
\end{equation*}
We could view the attention head as a layer computing 
\begin{equation*}
	\vec{X} \SoftmaxCol \left( \vec{X}^\top \vec{W}_K^\top \vec{W}_Q \vec{X} \right)
\end{equation*}
with the convex combinations mentioned above followed by a linear layer with weights $\vec{W}_V$, where each column is the “expected” embedding for each token in the sequence using the probabilities from the $\SoftmaxCol$.

\subsection{Multi-Head Attention.}

The \citet{vaswani2023}-popularized $H$-head attention generalizes the above choosing $H,D,C \in \N$ and taking 
\begin{align*}
	&\vec{Q}_h(\vec{X}) = \vec{W}_Q^h \vec{X} \in \Rnxm{D}{L} \\
	&\vec{K}_h(\vec{X}) = \vec{W}_K^h \vec{X} \in \Rnxm{D}{L} \\
	&\vec{V}_h(\vec{X}) = \vec{W}_V^h \vec{X} \in  \Rnxm{E}{L} \\
	&\text{where} \;\; \vec{W}_Q^h,\vec{W}_K^h \in  \Rnxm{D}{E} , \vec{W}_V^h \in \Rnxm{C}{E}
\end{align*}
for $h = 1, \dotsc, H$ and adding an additional affine operation
\begin{equation*}
	\sum_{h=1}^H \vec{W}_O^h \vec{V}^h(\vec{X}) \SoftmaxCol \left( \vec{K}_h(\vec{X})^\top \vec{Q}_h(\vec{X}) \right)
		+ \vec{b}_O \vec{1}^\top
\end{equation*}
for parameters $\vec{W}_O^h \in \Rnxm{E}{C}, \vec{b}_O \in \Rn{E}$ and $\vec{1} \in \Rn{L}$ a vector of ones. This form is the same as “concatenate and project” common in the literature and code like nanoGPT: 
\begin{equation*}
	\begin{pmatrix}
		\vec{W}_O^1 & \dotsb & \vec{W}_O^H
	\end{pmatrix}
	\begin{pmatrix}
		\vec{V}^1(\vec{X}) \SoftmaxCol \left( \vec{K}_1(\vec{X})^\top \vec{Q}_1(\vec{X}) \right) \\
		\vdots \\
		\vec{V}^H(\vec{X}) \SoftmaxCol \left( \vec{K}_H(\vec{X})^\top \vec{Q}_H(\vec{X}) \right) \\
	\end{pmatrix}
	+ \vec{b}_O \vec{1}^\top
\end{equation*}
Note there is a fan-out/fan-in effect between value and output weighting, with strong similarity to adding up low-rank outer products. We project ``down'' to $\Rn{C}$ from $\Rn{E}$ with the head-specific value weights (presuming $C < E$), and ``lift'' back up to $\Rn{E}$ with the “output” weights introduced enabling us to ``pool'' effects from the different heads. 

Counting up we have $2HE(D + C)$ weights, A convenient choice for the dimensions here is $D = C = E/H$ choosing an $H$ that divides $E$ which would imply we have a total of $4HED = 4E^2$ weights. 


\section{Parameter Identification in Attention}

Executing matrix-matrix products between weights results in a loss of degrees of freedom identifiable through training. Specifically, when our model multiplies weights directly inside attention heads, we form sums of products of elements that will be invariant to specific perturbations in the individual weights. This means two things: the model is ``over-specified'' in the specific sense that it has more parameters than degrees-of-freedom or ``uniquely meaningful'' effects, and there are linear subspaces over which layers including weight matrix products have vanishing directional derivatives with respect to weights (for any observations). This section refines these ideas and proves our basic results: (i) there are $D^2$ unidentified degrees of freedom in commonly used attention heads, and (ii) using multiple heads improves (increases) the ratio of identified degrees of freedom to parameters. Note that if there are always $D^2$ unidentified parameters (or combinations), an attention head can never be fully identified only asymptotically well identified. 

A side note about our style of discourse here: we purposefully use probably overly deliberate and constructive derivations. Hopefully that doesn't obscure our basic points which could be stated more tersely. 

\subsection{A Trivial, Yet Illustrative, Example.}

Suppose we forgo the embedding. That is, set $E = 1$ and $\vec{e}(x) = x$ in which case $\vec{X} = \vec{x}^\top \in \Rnxm{1}{L}$. Then $\vec{W}_Q,\vec{W}_K \in \Rnxm{D}{1}$ are column vectors, the queries and keys 
\begin{align*}
	&\vec{Q}(\vec{x}) = \vec{W}_Q \vec{x}^\top \in \Rnxm{D}{L} \\
	&\vec{K}(\vec{x}) = \vec{W}_K \vec{x}^\top \in \Rnxm{D}{L}
\end{align*}
are rank-1 transformations (outer products), and 
\begin{align*}
	\vec{K}(\vec{x})^\top \vec{Q}(\vec{x})
		= \vec{x}^\top \vec{W}_K^\top \vec{W}_Q \vec{x}
		= \omega \vec{xx}^\top 
	\quad\quad
	\omega = \vec{W}_K^\top \vec{W}_Q \in \R
\end{align*}
Even though this is entirely artificial (especially as we would likely constrain $D \leq E$) it raises an important point: for all the $2D$ variables we admit with $\vec{W}_Q,\vec{W}_K \in \Rnxm{D}{1}$ we only have a single effect, the inner product of these two vectors. Even if $D = 1$, we model with two numbers but only their product can have an effect. This is the prototype of an “identification” problem, in that we have $2D-1$ out of $2D$ "trainable" values that can't influence the layer output. 

\subsection{Invariances in Query-Key Products.}

In general a lack of parameter identification arises from any query-key products as defined here. The fundamental operation is 
\begin{equation*}
	\vec{K}(\vec{X})^\top\vec{Q}(\vec{X}) = \vec{X}^\top\vec{W}_K^\top \vec{W}_Q \vec{X}
\end{equation*}
which we’ll sometimes refer to as a “quadratic form” even though that’s not technically precise to the usual meaning. \citet{elhage2021} call this the ``QK-circuit''. The now-eponomous ``key-value cache'' \citep{brandon2024} for efficient inference stores results of the layer application, possibly ``hiding'' some of these effects. 

Multiplying two matrices has easily discoverable, trivial, invariances: First, any shared row permutation defined by $\vec{P} \in \Rnxm{D}{D}$, $\vec{P}^\top\vec{P} = \vec{I}$, can be injected: 
\begin{equation*}
	\left( \vec{PW_K} \right)^\top \left( \vec{PW}_Q \right)
		= \vec{W_K}^\top \vec{P}^\top\vec{P} \vec{W}_Q
		= \vec{W_K}^\top \vec{W}_Q
\end{equation*}
Of course this also shows any orthonormal matrix can be injected in the same way (of which permutation matrices are just a special case). Also any nonsingular row scaling $\vec{A} = \mathrm{diag}(\vec{a})$ for any vector $\vec{a} \in \Rn{D}$ with no zeros is an invariant via
\begin{equation*}
	\left( \vec{A}^{-1}\vec{W_K} \right)^\top \left( \vec{A}\vec{W}_Q \right)
		= \vec{W_K}^\top \vec{A}^{-\top}\vec{A} \vec{W}_Q
		= \vec{W_K}^\top \vec{A}^{-1}\vec{A} \vec{W}_Q
		= \vec{W_K}^\top \vec{W}_Q
\end{equation*}
We've more or less shown the following general invariance: 
\begin{lem}
For any pair $(\vec{W}_Q,\vec{W}_K) \in \Rnxm{D}{E}$ and any nonsingular $\vec{S} \in \Rnxm{D}{D}$, the pair $(\vec{SW}_Q,\vec{S}^{-\top}\vec{W}_K)$ creates the same quadratic form.
\end{lem}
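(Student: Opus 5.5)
The plan is a direct computation, mirroring the permutation and diagonal-scaling calculations just above. The only object that matters is the product $\vec{W}_K^\top \vec{W}_Q \in \Rnxm{E}{E}$, since the ``quadratic form'' is $\vec{X}^\top \vec{W}_K^\top \vec{W}_Q \vec{X}$ for every input $\vec{X} \in \Rnxm{E}{L}$. It therefore suffices to show that replacing $(\vec{W}_Q,\vec{W}_K)$ by $(\vec{S}\vec{W}_Q,\vec{S}^{-\top}\vec{W}_K)$ leaves this $E\times E$ matrix unchanged. Invariance of $\vec{K}(\vec{X})^\top\vec{Q}(\vec{X})$, and hence of the $\SoftmaxCol$ output and the whole head, then follows for all $\vec{X}$ at once, with or without masking.

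The steps, in order, are as follows.
\begin{enumerate}
\item Check that the new pair is well defined and has the right shapes. Since $\vec{S}$ is nonsingular, $\vec{S}^{-1}$ exists. Then $\vec{S}^{-\top} := (\vec{S}^{-1})^\top = (\vec{S}^\top)^{-1}$, which is $D\times D$. So both $\vec{S}\vec{W}_Q$ and $\vec{S}^{-\top}\vec{W}_K$ lie in $\Rnxm{D}{E}$.
\item Compute $(\vec{S}^{-\top}\vec{W}_K)^\top = \vec{W}_K^\top (\vec{S}^{-\top})^\top = \vec{W}_K^\top \vec{S}^{-1}$, using $(\vec{AB})^\top = \vec{B}^\top\vec{A}^\top$ and the fact that transposition and inversion commute.
\item Multiply: $\vec{W}_K^\top \vec{S}^{-1}\vec{S}\vec{W}_Q = \vec{W}_K^\top\vec{W}_Q$. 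Then sandwich between $\vec{X}^\top$ and $\vec{X}$.
\end{enumerate}

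Finally, I would remark that the earlier examples are special cases. Taking $\vec{S}=\vec{P}$ orthogonal gives $\vec{S}^{-\top}=\vec{P}$. Taking $\vec{S}=\vec{A}$ diagonal gives $\vec{S}^{-\top}=\vec{A}^{-1}$. This shows the lemma unifies the two invariances and exhibits a full $GL_D(\R)$ action, hence $D^2$ directions.

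There is no genuine obstacle. The only point needing care is step 2: keeping the transpose-inverse bookkeeping straight, and being explicit that $\vec{S}^{-\top}$ means $(\vec{S}^{-1})^\top$.
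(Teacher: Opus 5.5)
Your proposal is correct and is exactly the paper's one-line computation, $(\vec{S}^{-\top}\vec{W}_K)^\top(\vec{S}\vec{W}_Q)=\vec{W}_K^\top\vec{S}^{-1}\vec{S}\vec{W}_Q=\vec{W}_K^\top\vec{W}_Q$; your version also keeps the transpose on $\vec{W}_K$ that the paper's displayed line drops. One aside: your closing ``hence $D^2$ directions'' goes beyond the lemma and needs a rank condition, such as $\vec{W}_Q$ or $\vec{W}_K$ having rank $D$ (at $\vec{W}_Q=\vec{W}_K=\vec{0}$ the $GL_D(\R)$ action is trivial), but the lemma itself does not depend on it.
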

\begin{proof}
$(\vec{S}^{-\top}\vec{W}_K)^\top(\vec{SW}_Q) = \vec{W}_K\vec{S}^{-1}\vec{S}\vec{W}_Q = \vec{W}_K\vec{W}_Q$
\end{proof}
Our examples above the lemma furnish specific examples: we can swap the rows of our weights around without effect, or more generally rotate them in any way, or scale up one set of weights as long as we comparably shrink the other. This fact is corroborated by an analysis of directional derivatives, which is actually a distinct result about locally linear approximations relevant to training. Generally speaking, this invariance should be a reasonable concern for gradient style optimization methods, unless there are some other (natural or imposed) corrective mechanisms at play. 

This also means there are $D^2$ out of $2DE$ “unidentified” degrees of freedom in any $D \times E$ attention head. That is, only a fraction
\begin{equation*}
	\frac{2DE - D^2}{2DE} = \frac{2E - D}{2E} = 1 - \frac{1}{2} \frac{D}{E}
\end{equation*}
of the parameters have “uniquely meaningful” effects, though we can’t necessarily find "the specific ones". If we used a full rank attention head with $D = E$, only $1/2$ of the parameters are “uniquely meaningful” in this sense; but note that the smaller $D$ is, i.e. the lower rank the query-key transformation is, the more identification we can obtain. But it can never be “all of the parameters”, because the best we can do is $1 - 1/(2E)$ (with $D = 1$). 

We should comment that it is hard to envision how one could directly and effectively capitalize on this fact by training over only “uniquely meaningful” parameters. This point hopefully gets clearer with the additional exposition below. 

\subsection{A “Principled” Approach.}

We can view this effect a different if laborious way: 
\begin{lem}
For any $\vec{W}_Q,\vec{W}_K \in \Rnxm{D}{E}$ there are matrices $\bar{\vec{U}}_Q, \bar{\vec{U}}_K \in \Rnxm{E}{D}$ with orthonormal columns and a vector $\boldsymbol{\sigma} \in \Rn{D}$ with non-negative entries such that 
\begin{equation*}
	\bar{\vec{W}}_Q = \mathrm{diag}(\boldsymbol{\sigma})\bar{\vec{U}}_Q
	\quad,\quad
	\bar{\vec{W}}_K = \mathrm{diag}(\boldsymbol{\sigma})\bar{\vec{U}}_K
\end{equation*}
has the same quadratic form as does $(\vec{W}_Q,\vec{W}_K)$
\end{lem}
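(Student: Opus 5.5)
The plan is to reduce everything to the $E\times E$ matrix $\vec{M} = \vec{W}_K^\top\vec{W}_Q$, since the quadratic form $\vec{X}^\top\vec{W}_K^\top\vec{W}_Q\vec{X}$ depends on the weights only through $\vec{M}$. I will then produce the claimed factorization from a singular value decomposition of $\vec{M}$.

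\textbf{Reading the statement.} Because $\bar{\vec{U}}_Q,\bar{\vec{U}}_K\in\Rnxm{E}{D}$, the products only typecheck if they are read as
\begin{equation*}
\bar{\vec{W}}_Q = \mathrm{diag}(\boldsymbol{\sigma})\bar{\vec{U}}_Q^\top, \qquad \bar{\vec{W}}_K = \mathrm{diag}(\boldsymbol{\sigma})\bar{\vec{U}}_K^\top .
\end{equation*}
I will work with this reading. I will also assume $D\le E$, as the paper does, since otherwise $D$ orthonormal columns in $\Rn{E}$ cannot exist.

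\textbf{Step 1: rank and SVD.} Since $\vec{M}=\vec{W}_K^\top\vec{W}_Q$ factors through $\Rn{D}$, we have $\mathrm{rank}(\vec{M})=r\le D$. Take a compact SVD
\begin{equation*}
\vec{M}=\sum_{i=1}^r s_i\,\vec{u}_i\vec{v}_i^\top, \qquad s_i>0,
\end{equation*}
where the $\vec{u}_i$ are orthonormal and the $\vec{v}_i$ are orthonormal in $\Rn{E}$.

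\textbf{Step 2: padding to $D$ columns.} If $r<D$, extend $\{\vec{u}_i\}$ and $\{\vec{v}_i\}$ to orthonormal sets of size $D$, which is possible because $D\le E$. Set $s_i=0$ for $i>r$. Then define
\begin{equation*}
\bar{\vec{U}}_K=(\vec{u}_1,\dots,\vec{u}_D), \qquad \bar{\vec{U}}_Q=(\vec{v}_1,\dots,\vec{v}_D), \qquad \sigma_i=\sqrt{s_i}\ge 0 .
\end{equation*}

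\textbf{Step 3: verification.} Because $\mathrm{diag}(\boldsymbol{\sigma})$ is symmetric,
\begin{equation*}
\bar{\vec{W}}_K^\top\bar{\vec{W}}_Q=\bar{\vec{U}}_K\,\mathrm{diag}(\boldsymbol{\sigma})^2\,\bar{\vec{U}}_Q^\top=\sum_{i=1}^D s_i\,\vec{u}_i\vec{v}_i^\top=\vec{M}.
\end{equation*}
Hence $\vec{X}^\top\bar{\vec{W}}_K^\top\bar{\vec{W}}_Q\vec{X}=\vec{X}^\top\vec{W}_K^\top\vec{W}_Q\vec{X}$ for every input $\vec{X}$, which is the claim.

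\textbf{Main obstacle.} There is no real analytic difficulty. The points needing care are the dimensional bookkeeping and the rank-deficient case. One must observe that $\mathrm{rank}(\vec{M})\le D$ to ensure only $D$ singular values are needed, and pad with zero singular values and arbitrary orthonormal completions when the rank is smaller. I would also remark that when $r<D$ the new pair need not be related to the old one by the $\vec{S}$-transformation of the previous lemma, since the padding is not unique. The lemma only asserts equality of the quadratic forms, so this does not matter.
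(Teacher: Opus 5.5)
Your proof is correct and is essentially the paper's: take an SVD of $\vec{W}_K^\top\vec{W}_Q$, use that its rank is at most $D$, keep the leading $D$ left and right singular vectors, and set $\sigma_d^2$ equal to the singular values. (The paper uses a full $E\times E$ SVD, which supplies your orthonormal padding automatically.) Your bookkeeping is cleaner than the paper's. Its proof says ``first $D$ rows'' and ``non-decreasing'' where it means columns and non-increasing, and it attaches the left singular vectors to $\bar{\vec{U}}_Q$ rather than $\bar{\vec{U}}_K$, which gives $\vec{M}^\top$ instead of $\vec{M}$ under the transposed reading; your assignment is the right one.
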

\begin{proof}
Take an SVD of the (square) product $\vec{W}_K^\top\vec{W}_Q = \bar{\vec{U}}\bar{\boldsymbol{\Sigma}}\bar{\vec{V}}^\top$ where $\bar{\vec{U}},\bar{\vec{V}} \in \Rnxm{E}{E}$ have orthonormal columns and $\bar{\boldsymbol{\Sigma}} \in \Rnxm{E}{E}$ is a diagonal matrix with non-negative elements non-decreasing in index the last $E-D$ of which must be zero. The result follows immediately taking $\bar{\vec{U}}_Q = \bar{\vec{U}}_{1:D,:}$, $\bar{\vec{U}}_K = \bar{\vec{V}}_{1:D,:}$ where $1:D,:$ notates taking first $D$ rows and $\sigma_d^2 = \bar{\boldsymbol{\Sigma}}_{d,d}$ for $d=1,\dotsc,D$. 
\end{proof}
Note that the resulting “factorization” here is still signed-permutation invariant. That is, we can change signs and ordering without influencing the result product. However neither of those are linear effects, and thus perhaps not relevant from the perspective of gradient-based training. 

Imagine for a moment we used this as a {\em definition} of $\vec{W}_Q,\vec{W}_K$. We would train over $2DE + D$ parameters while constraining the matrices to be composed of orthonormal columns. Putting aside the practicality of doing this, we effectively reduce the number of degrees of freedom in each matrix by 
\begin{equation*}
	\frac{D(D-1)}{2} + D = \frac{D(D+1)}{2}
\end{equation*}
from the orthonormality constraints on the $D$ columns. In total, we would have $2DE + D$ parameters with only 
\begin{equation*}
	2DE + D - 2\frac{D(D+1)}{2} 
		= 2DE + D - D(D+1)
		= 2DE - D^2
\end{equation*}
actual degrees of freedom. Note the same outcome: $D^2$ fewer degrees of freedom. 

Actually following through on this thought experiment, probably by keeping the query-key weights orthonormal and adding scaling values as an other term in the product, is likely impractical. Projected gradient methods might, in principle, allow us to maintain orthonormality over weight up dates but at the cost of 2 SVDs per attention layer update. Literally executing repeated factorizations of that sort during training would not scale. 

\subsection{Directional Derivatives.}

Unsurprisingly these invariances overlap with differentiation. Our lemmas above formally apply to direct perturbations of weights but not local linearity of the query-keyquery-key quadratic form, which might behave differently. Given that training updates are closely tied to differentiation, it’s worth considering the same sort of effect from that viewpoint. 

The $(\boldsymbol{\delta}\vec{W}_Q,\boldsymbol{\delta}\vec{W}_K)$-directional derivative at $(\vec{W}_Q,\vec{W}_K)$ of the quadratic form in attention is 
\begin{equation*}
	\vec{X}^\top \left( \vec{W}_K^\top\boldsymbol{\delta}\vec{W}_Q + \boldsymbol{\delta}\vec{W}_K^\top\vec{W}_Q \right) \vec{X}
\end{equation*}
Such a derivative vanishes for any embedded token sequence (or it’s propagation through a network) when
\begin{equation*}
	\vec{W}_K^\top\boldsymbol{\delta}\vec{W}_Q + \boldsymbol{\delta}\vec{W}_K^\top\vec{W}_Q = \vec{0}
\end{equation*}
For which subspaces of $(\boldsymbol{\delta}\vec{W}_Q,\boldsymbol{\delta}\vec{W}_K)$ does that hold? One trivial zero is obvious: 
\begin{equation*}
	(\boldsymbol{\delta}\vec{W}_Q,\boldsymbol{\delta}\vec{W}_K) = (\vec{W}_Q,-\vec{W}_K)
\end{equation*}
as can be immediately checked. But we should ask for which other choices that could hold, and how many lost “degrees of freedom” that implies.
\begin{lem}
Let $D \leq E$ and presume $\vec{W}_Q,\vec{W}_K \in \Rnxm{D}{E}$ are full-rank (that is, are rank $D$). There are directions $(\boldsymbol{\delta}\vec{W}_K,\boldsymbol{\delta}\vec{W}_Q) \in \Rnxm{D}{E}$ derived from linear transformations of any $\vec{S} \in \Rnxm{D}{D}$ in which $\vec{W}_K^\top\vec{W}_Q$ has zero
directional derivative.
\end{lem}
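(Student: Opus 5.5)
The plan is to differentiate the invariance from the earlier lemma, the one showing that $(\vec{SW}_Q,\vec{S}^{-\top}\vec{W}_K)$ yields the same quadratic form. Consider the smooth curve $\vec{S}(t) = \vec{I} + t\vec{S}$ for an arbitrary $\vec{S} \in \Rnxm{D}{D}$. For $|t|$ small, $\vec{S}(t)$ is nonsingular. The earlier lemma then gives
\begin{equation*}
	\left(\vec{S}(t)^{-\top}\vec{W}_K\right)^\top \left(\vec{S}(t)\vec{W}_Q\right) = \vec{W}_K^\top\vec{W}_Q
\end{equation*}
identically in $t$. Differentiating at $t=0$, and using $\frac{d}{dt}\vec{S}(t)^{-1}\big|_{t=0} = -\vec{S}$, produces the candidate directions
\begin{equation*}
	\boldsymbol{\delta}\vec{W}_Q = \vec{S}\vec{W}_Q, \qquad \boldsymbol{\delta}\vec{W}_K = -\vec{S}^\top\vec{W}_K .
\end{equation*}
Both are linear in $\vec{S}$, as the statement requires. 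I would also verify the claim directly, so the argument does not rest on differentiating an identity. Substituting into the directional derivative condition gives
\begin{equation*}
	\vec{W}_K^\top\boldsymbol{\delta}\vec{W}_Q + \boldsymbol{\delta}\vec{W}_K^\top\vec{W}_Q = \vec{W}_K^\top\vec{S}\vec{W}_Q - \vec{W}_K^\top\vec{S}\vec{W}_Q = \vec{0},
\end{equation*}
so the directional derivative $\vec{X}^\top(\cdot)\vec{X}$ vanishes for every input $\vec{X}$. Taking $\vec{S} = \vec{I}$ recovers the trivial zero $(\vec{W}_Q,-\vec{W}_K)$ noted above.

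The second step, which is where the full-rank hypothesis enters, is to show that these directions really represent $D^2$ degrees of freedom, matching the earlier count. The map $\vec{S} \mapsto (\vec{S}\vec{W}_Q, -\vec{S}^\top\vec{W}_K)$ is linear, so it suffices to show its kernel is trivial. Suppose $\vec{S}\vec{W}_Q = \vec{0}$. Since $\vec{W}_Q$ has rank $D$ with $D \le E$, it has a right inverse, namely $\vec{W}_Q^\top(\vec{W}_Q\vec{W}_Q^\top)^{-1}$. Multiplying by it gives $\vec{S} = \vec{0}$. Hence the image is a $D^2$-dimensional subspace of the $2DE$-dimensional parameter space on which the derivative vanishes.

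The main obstacle is the question the text raises just before the statement: whether these are all such directions. The lemma as stated only asserts existence, so this is not strictly needed. If I pursued it, I would write $\vec{W}_Q$ and $\vec{W}_K$ in bases adapted to their row spaces and analyze the linear map
\begin{equation*}
	(\boldsymbol{\delta}\vec{W}_Q,\boldsymbol{\delta}\vec{W}_K) \mapsto \vec{W}_K^\top\boldsymbol{\delta}\vec{W}_Q + \boldsymbol{\delta}\vec{W}_K^\top\vec{W}_Q .
\end{equation*}
Its image is the tangent space to the rank-$\le D$ matrices at a rank-$D$ point, which has dimension $2DE - D^2$. Rank--nullity would then give a kernel of dimension exactly $D^2$. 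This requires $\vec{W}_K^\top\vec{W}_Q$ itself to have rank $D$, which is slightly stronger than each factor being full rank, so that case would need a caveat.
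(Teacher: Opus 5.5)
Your proof is correct, but it uses a different construction from the paper's. The paper takes wide SVDs $\vec{W}_K=\bar{\vec{U}}_K\bar{\boldsymbol{\Sigma}}_K\bar{\vec{V}}_K^\top$ and $\vec{W}_Q=\bar{\vec{U}}_Q\bar{\boldsymbol{\Sigma}}_Q\bar{\vec{V}}_Q^\top$. It then sets $\boldsymbol{\delta}\vec{W}_Q(\vec{S})=\bar{\vec{U}}_K\bar{\boldsymbol{\Sigma}}_K^{-1}\vec{S}\bar{\vec{V}}_Q^\top$ and $\boldsymbol{\delta}\vec{W}_K(\vec{S})=-\bar{\vec{U}}_Q\bar{\boldsymbol{\Sigma}}_Q^{-1}\vec{S}^\top\bar{\vec{V}}_K^\top$, so the two terms of the derivative are $\pm\bar{\vec{V}}_K\vec{S}\bar{\vec{V}}_Q^\top$. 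There the full-rank hypothesis is needed just to write the directions down, since $\bar{\boldsymbol{\Sigma}}$ must be inverted.

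You instead use the infinitesimal generator $(\vec{S}\vec{W}_Q,-\vec{S}^\top\vec{W}_K)$ of the $GL_D(\R)$ action from the earlier invariance lemma. The two families are the same $D^2$-dimensional subspace in different coordinates. Your direction with matrix $\vec{T}=\bar{\vec{U}}_K\bar{\boldsymbol{\Sigma}}_K^{-1}\vec{S}\bar{\boldsymbol{\Sigma}}_Q^{-1}\bar{\vec{U}}_Q^\top$ is exactly the paper's direction with matrix $\vec{S}$, and $\vec{S}\mapsto\vec{T}$ is a linear bijection.

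Your route has several advantages:
\begin{itemize}
\item It avoids the SVD.
\item The vanishing holds for arbitrary $(\vec{W}_Q,\vec{W}_K)$; full rank is used only for the dimension count. The paper leaves that count as a ``trivially verifiable'' corollary, whereas you actually prove it via the right inverse.
\item The trivial zero $(\vec{W}_Q,-\vec{W}_K)$ is just $\vec{S}=\vec{I}$, rather than the specially constructed $\vec{S}$ in the paper's corollary.
\item It makes explicit that the locally flat directions are tangent to the global invariance orbits.
\item It gives a clean second-order term, $\boldsymbol{\delta}\vec{W}_K^\top\boldsymbol{\delta}\vec{W}_Q=-\vec{W}_K^\top\vec{S}^2\vec{W}_Q$.
\end{itemize}
The paper's coordinates mainly buy a normalized form, in which the cancelling term $\bar{\vec{V}}_K\vec{S}\bar{\vec{V}}_Q^\top$ is an isometric image of $\vec{S}$.

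Your completeness sketch goes beyond anything the paper proves, and your closing caveat is unnecessary. Because $\vec{W}_K^\top$ has full column rank, $\vec{W}_K^\top\vec{W}_Q\vec{y}=\vec{0}$ if and only if $\vec{W}_Q\vec{y}=\vec{0}$. Hence $\mathrm{rank}(\vec{W}_K^\top\vec{W}_Q)=D$ automatically under the lemma's own hypotheses. Your rank--nullity argument then shows the $D^2$-dimensional family is the entire kernel.
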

Note here we do not require $\vec{S}$ to be invertible, unlike the result above.
\begin{proof}
Take (wide) SVDs
\begin{equation*}
	\vec{W}_K = \bar{\vec{U}}_K\bar{\boldsymbol{\Sigma}}_K \bar{\vec{V}}_K^\top
	\quad\quad
	\vec{W}_Q = \bar{\vec{U}}_Q\bar{\boldsymbol{\Sigma}}_Q \bar{\vec{V}}_Q^\top
\end{equation*}
where $\bar{\vec{U}}_K, \bar{\boldsymbol{\Sigma}}_K \in \Rnxm{D}{D}$ and $\bar{\vec{V}}_K \in \Rnxm{E}{D}$ with $\bar{\vec{U}}_K^\top\bar{\vec{U}}_K = \vec{I}$ and $\bar{\boldsymbol{\Sigma}}_K$ diagonal and also invertible (the same holds for the $Q$ subscript). For any $\vec{S} \in \Rnxm{D}{D}$ define
\begin{equation*}
	\boldsymbol{\delta}\vec{W}_Q(\vec{S}) = \bar{\vec{U}}_K\bar{\boldsymbol{\Sigma}}_K^{-1} \vec{S} \bar{\vec{V}}_Q^\top
	\quad,\quad
	\boldsymbol{\delta}\vec{W}_K(\vec{S}) = - \bar{\vec{U}}_Q\bar{\boldsymbol{\Sigma}}_Q^{-1} \vec{S}^\top \bar{\vec{V}}_K^\top
\end{equation*}
Then
\begin{align*}
	\vec{W}_K^\top \boldsymbol{\delta}\vec{W}_Q(\vec{S})
		= \bar{\vec{V}}_K \bar{\boldsymbol{\Sigma}}_K \bar{\vec{U}}_K^\top \bar{\vec{U}}_K\bar{\boldsymbol{\Sigma}}_K^{-1} \vec{S} \bar{\vec{V}}_Q^\top
		= \bar{\vec{V}}_K \vec{S} \bar{\vec{V}}_Q^\top
\end{align*}
and 
\begin{align*}
	\boldsymbol{\delta}\vec{W}_K(\vec{S})^\top \vec{W}_Q
		= - \bar{\vec{V}}_K \vec{S} \bar{\boldsymbol{\Sigma}}_Q^{-1} \bar{\vec{U}}_Q^\top \bar{\vec{U}}_Q \bar{\boldsymbol{\Sigma}}_Q\bar{\vec{V}}_Q^\top
		= - \bar{\vec{V}}_K \vec{S} \bar{\vec{V}}_Q^\top
\end{align*}
and thus 
\begin{align*}
	\vec{W}_K^\top \boldsymbol{\delta}\vec{W}_Q(\vec{S}) + \boldsymbol{\delta}\vec{W}_K(\vec{S})^\top \vec{W}_Q = \vec{0}
\end{align*}
This proves the claim that the directional derivatives of $\vec{W}_K^\top\vec{W}_Q$ are zero in the directions $(\boldsymbol{\delta}\vec{W}_K(\vec{S}),\boldsymbol{\delta}\vec{W}_Q(\vec{S}) )$, for any $\vec{S}$, for all inputs $\vec{X}$. 
\end{proof}

The following addition is trivially verifiable:
\begin{cor}
The set of perturbations $\vec{S} \to (\boldsymbol{\delta}\vec{W}_Q(\vec{S}),\boldsymbol{\delta}\vec{W}_K(\vec{S}))$ that have zero directional derivative derived in the proof of the lemma is a linear subspace of dimension $D^2$. 
\end{cor}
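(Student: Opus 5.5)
The plan is to show that the map $\Phi:\vec{S} \mapsto (\boldsymbol{\delta}\vec{W}_Q(\vec{S}),\boldsymbol{\delta}\vec{W}_K(\vec{S}))$ from the proof of the lemma is a linear injection from $\Rnxm{D}{D}$ into $\Rnxm{D}{E}\times\Rnxm{D}{E}$. Its image is then a linear subspace whose dimension equals $\dim \Rnxm{D}{D} = D^2$.

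Linearity is immediate from the defining formulas. Both $\boldsymbol{\delta}\vec{W}_Q(\vec{S}) = \bar{\vec{U}}_K\bar{\boldsymbol{\Sigma}}_K^{-1}\vec{S}\bar{\vec{V}}_Q^\top$ and $\boldsymbol{\delta}\vec{W}_K(\vec{S}) = -\bar{\vec{U}}_Q\bar{\boldsymbol{\Sigma}}_Q^{-1}\vec{S}^\top\bar{\vec{V}}_K^\top$ are obtained from $\vec{S}$ by transposition and fixed left and right matrix multiplications, all of which are linear. The image of a linear map is a subspace, and the lemma already shows that every element of this image has zero directional derivative. So we have a linear subspace of zero-derivative directions.

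For the dimension count, I will show $\ker\Phi = \{\vec{0}\}$ by recovering $\vec{S}$ from its first component. Because $\vec{W}_K$ has rank $D$, the matrix $\bar{\vec{U}}_K$ is an orthogonal $D\times D$ matrix and $\bar{\boldsymbol{\Sigma}}_K$ is invertible. Because $\vec{W}_Q$ has rank $D$, the matrix $\bar{\vec{V}}_Q\in\Rnxm{E}{D}$ has orthonormal columns, so $\bar{\vec{V}}_Q^\top\bar{\vec{V}}_Q = \vec{I}_D$. Hence
\begin{equation*}
	\bar{\boldsymbol{\Sigma}}_K\bar{\vec{U}}_K^\top\,\boldsymbol{\delta}\vec{W}_Q(\vec{S})\,\bar{\vec{V}}_Q = \vec{S},
\end{equation*}
which gives a left inverse of $\Phi$ (composed with projection onto the first component). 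So $\Phi$ is injective, and rank--nullity gives that the image has dimension exactly $D^2$.

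The only step needing care is the use of the full-rank hypothesis. It is what makes $\bar{\boldsymbol{\Sigma}}_K$ invertible and gives $\bar{\vec{V}}_Q$ orthonormal columns, and this is exactly what the left-inverse construction uses. Without it, $\vec{S}\bar{\vec{V}}_Q^\top$ could lose information and the dimension could drop below $D^2$. Beyond that check, the argument is routine.
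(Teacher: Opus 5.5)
Your proof is correct and is the verification the paper leaves implicit when it calls the corollary ``trivially verifiable'': the map $\vec{S}\mapsto(\boldsymbol{\delta}\vec{W}_Q(\vec{S}),\boldsymbol{\delta}\vec{W}_K(\vec{S}))$ is linear, and your left inverse $\bar{\boldsymbol{\Sigma}}_K\bar{\vec{U}}_K^\top(\cdot)\bar{\vec{V}}_Q$ shows it is injective, so its image has dimension $D^2$. One small correction: $\bar{\vec{V}}_Q$ has orthonormal columns simply because it comes from a thin SVD, whatever the rank; the full-rank hypothesis is instead what makes $\bar{\boldsymbol{\Sigma}}_K$ and $\bar{\boldsymbol{\Sigma}}_Q$ invertible, i.e.\ what makes the map defined at all.
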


Also, our trivial zero is included in the subspace of zeros:
\begin{cor}
$(\vec{W}_Q,-\vec{W}_K)$ is in the subspace of zeros of the directional derivative of the quadratic form derived above.
\end{cor}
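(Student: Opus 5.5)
The plan is to exhibit an explicit $\vec{S} \in \Rnxm{D}{D}$ with $(\boldsymbol{\delta}\vec{W}_Q(\vec{S}),\boldsymbol{\delta}\vec{W}_K(\vec{S})) = (\vec{W}_Q,-\vec{W}_K)$. Here $\boldsymbol{\delta}\vec{W}_Q(\vec{S})$ and $\boldsymbol{\delta}\vec{W}_K(\vec{S})$ are the maps built in the proof of the preceding lemma. Since the subspace in the previous corollary is by definition the image of this linear map $\vec{S} \mapsto (\boldsymbol{\delta}\vec{W}_Q(\vec{S}),\boldsymbol{\delta}\vec{W}_K(\vec{S}))$, such an $\vec{S}$ proves membership. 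I keep the same wide SVDs $\vec{W}_K = \bar{\vec{U}}_K\bar{\boldsymbol{\Sigma}}_K\bar{\vec{V}}_K^\top$ and $\vec{W}_Q = \bar{\vec{U}}_Q\bar{\boldsymbol{\Sigma}}_Q\bar{\vec{V}}_Q^\top$, under the same full-rank hypothesis.

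First, I solve the $Q$-equation for $\vec{S}$. Requiring
\begin{equation*}
	\bar{\vec{U}}_K\bar{\boldsymbol{\Sigma}}_K^{-1}\vec{S}\bar{\vec{V}}_Q^\top = \bar{\vec{U}}_Q\bar{\boldsymbol{\Sigma}}_Q\bar{\vec{V}}_Q^\top
\end{equation*}
suggests the choice $\vec{S} = \bar{\boldsymbol{\Sigma}}_K\bar{\vec{U}}_K^\top\bar{\vec{U}}_Q\bar{\boldsymbol{\Sigma}}_Q$. To verify it, substitute and use $\bar{\vec{U}}_K\bar{\vec{U}}_K^\top = \vec{I}$. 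This identity holds because $\bar{\vec{U}}_K$ is square $D\times D$ with orthonormal columns, so it is orthogonal and also has orthonormal rows. Equivalently, one can write $\vec{S} = \bar{\boldsymbol{\Sigma}}_K\bar{\vec{U}}_K^\top\vec{W}_Q\bar{\vec{V}}_Q$, which makes the construction transparent.

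Second, I check that this same $\vec{S}$ gives the $K$-component. Compute
\begin{equation*}
	\boldsymbol{\delta}\vec{W}_K(\vec{S}) = -\bar{\vec{U}}_Q\bar{\boldsymbol{\Sigma}}_Q^{-1}\bar{\boldsymbol{\Sigma}}_Q\bar{\vec{U}}_Q^\top\bar{\vec{U}}_K\bar{\boldsymbol{\Sigma}}_K\bar{\vec{V}}_K^\top = -\bar{\vec{U}}_Q\bar{\vec{U}}_Q^\top\vec{W}_K = -\vec{W}_K,
\end{equation*}
where the last step again uses orthogonality of the square factor $\bar{\vec{U}}_Q$.

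The only real point of care is this double use of square orthogonality: it is what lets a single $\vec{S}$ fix both components simultaneously. The $\bar{\vec{V}}$ factors are only $E\times D$, so the argument must never need $\bar{\vec{V}}\bar{\vec{V}}^\top = \vec{I}$, and the computation above indeed avoids it. Invertibility of $\bar{\boldsymbol{\Sigma}}_Q,\bar{\boldsymbol{\Sigma}}_K$ comes from the full-rank hypothesis. As a consistency check, one can also confirm directly that $\vec{W}_K^\top\vec{W}_Q - \vec{W}_K^\top\vec{W}_Q = \vec{0}$.
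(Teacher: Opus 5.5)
Your proposal is correct and matches the paper's proof: the paper picks exactly $\vec{S} = \bar{\boldsymbol{\Sigma}}_K \bar{\vec{U}}_K^\top \bar{\vec{U}}_Q \bar{\boldsymbol{\Sigma}}_Q$ and verifies both components using $\bar{\vec{U}}_K\bar{\vec{U}}_K^\top = \bar{\vec{U}}_Q\bar{\vec{U}}_Q^\top = \vec{I}$. Your version is slightly cleaner, because your left-hand-side labels agree with the lemma's definitions of $\boldsymbol{\delta}\vec{W}_Q(\vec{S})$ and $\boldsymbol{\delta}\vec{W}_K(\vec{S})$, whereas the paper's displayed verification swaps the $Q$ and $K$ subscripts.
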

\begin{proof}
Set $\vec{S} = \bar{\boldsymbol{\Sigma}}_K \bar{\vec{U}}_K^\top \bar{\vec{U}}_Q \bar{\boldsymbol{\Sigma}}_Q$ and verify 
\begin{align*}
	\boldsymbol{\delta}\vec{W}_K(\vec{S}) 
		&= \bar{\vec{U}}_K\bar{\boldsymbol{\Sigma}}_K^{-1} \bar{\boldsymbol{\Sigma}}_K \bar{\vec{U}}_K^\top \bar{\vec{U}}_Q \bar{\boldsymbol{\Sigma}}_Q \bar{\vec{V}}_Q^\top
		\\
		&= \bar{\vec{U}}_K \bar{\vec{U}}_K^\top \bar{\vec{U}}_Q \bar{\boldsymbol{\Sigma}}_Q \bar{\vec{V}}_Q^\top
		= \bar{\vec{U}}_Q \bar{\boldsymbol{\Sigma}}_Q \bar{\vec{V}}_Q^\top = \vec{W}_Q
		\\
	\boldsymbol{\delta}\vec{W}_Q(\vec{S})
		&= - \bar{\vec{U}}_Q\bar{\boldsymbol{\Sigma}}_Q^{-1} ( \bar{\boldsymbol{\Sigma}}_K \bar{\vec{U}}_K^\top \bar{\vec{U}}_Q \bar{\boldsymbol{\Sigma}}_Q )^\top \bar{\vec{V}}_K^\top 
		\\
		&= - \bar{\vec{U}}_Q\bar{\boldsymbol{\Sigma}}_Q^{-1}  \bar{\boldsymbol{\Sigma}}_Q \bar{\vec{U}}_Q^\top \bar{\vec{U}}_K \bar{\boldsymbol{\Sigma}}_K \bar{\vec{V}}_K^\top 
		\\
		&= - \bar{\vec{U}}_Q \bar{\vec{U}}_Q^\top \bar{\vec{U}}_K \bar{\boldsymbol{\Sigma}}_K \bar{\vec{V}}_K^\top 
		= - \bar{\vec{U}}_K \bar{\boldsymbol{\Sigma}}_K \bar{\vec{V}}_K^\top = - \vec{W}_K
\end{align*}
to prove the result.
\end{proof}

Importantly, this result does not say the {\em actual value} of the quadratic form is invariant for steps in this direction, but rather only that it is a higher-than-first-order effect. In particular, the actual change in the quadratic form is determined by characteristically messy second order effects in these directions:
\begin{align*}
	&\vec{W}_K^\top\vec{W}_Q 
		+ \vec{W}_K^\top \boldsymbol{\delta}\vec{W}_Q(\vec{S}) 
		+ \boldsymbol{\delta}\vec{W}_K(\vec{S})^\top \vec{W}_Q
		+ \boldsymbol{\delta}\vec{W}_K(\vec{S})^\top \boldsymbol{\delta}\vec{W}_Q(\vec{S}) 
	\\
	&\quad\quad
		= \vec{W}_K^\top\vec{W}_Q + \boldsymbol{\delta}\vec{W}_K(\vec{S})^\top \boldsymbol{\delta}\vec{W}_Q(\vec{S}) 
		\quad\quad\text{(for the linearly invariant steps)}
	\\
	&\quad\quad
		= \vec{W}_K^\top\vec{W}_Q 
			+ \bar{\vec{V}}_K\vec{S}\bar{\boldsymbol{\Sigma}}_Q^{-1}\bar{\vec{U}}_Q^\top
				\bar{\vec{U}}_K\bar{\boldsymbol{\Sigma}}_K^{-1} \vec{S} \bar{\vec{V}}_Q^\top
\end{align*}

\subsection{Multi-Head Attention Improves Identification.}

The results above apply immediately to each head in a multi-head attention and imply our main result. Suppose we split into $H$ heads where $H$ divides $E$ and set $D = E/H$. This is, in fact, the basic idea in \citet{vaswani2023} as well as implementations like \texttt{nanoGPT}. By the same logic as following our invariance lemma, there are $2HDE = 2E^2$ parameters in all the attention head query-key products (that is, not including value and output pooling weights), of which $HD^2 = E^2/H$ are unidentified, $D^2$ from each head. Equivalently we have a fraction
\begin{equation*}
	1 - \frac{HD^2}{2HDE} 
		= 1 - \frac{D}{2E} 
		= 1 - \frac{E}{2HE}
		= 1 - \frac{1}{2H}
\end{equation*}
identified parameters or degrees-of-freedom. Hence we can increase the identified fraction of parameters just by using more heads. Ignoring all other considerations, we can use up to E rank-one heads and again obtain “meaning” for a fraction $1 - 1/(2E)$ of parameters at most. That is likely a poor actual strategy due to the computationally intensive nature of a $\Softmax$-based attention head, and some batching ($D > 1$) is probably valuable. 

Importantly it shouldn’t take a large number of heads to see real benefits. For example with 6 heads $11/12$ or about 83\% of the parameters have identified effects, and with 8 heads already $15/16$ or about 94\% of the parameters have identified effects. 

Again this result doesn't directly account for the output projections involving the multiplication of $\vec{W}_O^h$ and $\vec{W}_V^h$ weights. But we again have weight matrix products like $\vec{W}_O^h\vec{W}_V^h$ which, even if low rank, have invariances that reduce degrees of freedom. This is the ``OV-Circuit'' in \citet{elhage2021}, and exactly the same situation occurs for these matrix products \citep{zhao2025}: 

\begin{lem}
For any pair $\vec{W}_O \in \Rnxm{E}{C}$ and $\vec{W}_V \in \Rnxm{C}{E}$ and any nonsingular $\vec{S} \in \Rnxm{C}{C}$, the pair $(\vec{W}_O\vec{S}^{-1}, \vec{SW}_V )$ has the same product as $(\vec{W}_O,\vec{W}_V)$. Moreover the directional derivatives of the product $\vec{W}_O\vec{W}_V$ with respect to directions in both weight matrices vanish on a subspace of dimension $C^2$. 
\end{lem}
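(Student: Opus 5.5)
The first claim is a direct computation, the same as in the query-key invariance lemma: $(\vec{W}_O\vec{S}^{-1})(\vec{S}\vec{W}_V) = \vec{W}_O(\vec{S}^{-1}\vec{S})\vec{W}_V = \vec{W}_O\vec{W}_V$. Here nonsingularity of $\vec{S}$ is the only requirement.

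For the derivative claim I would follow the directional-derivative lemma for query-key products, transposed appropriately. The $(\boldsymbol{\delta}\vec{W}_O,\boldsymbol{\delta}\vec{W}_V)$-directional derivative of the product at $(\vec{W}_O,\vec{W}_V)$ is
\begin{equation*}
	\boldsymbol{\delta}\vec{W}_O\vec{W}_V + \vec{W}_O\boldsymbol{\delta}\vec{W}_V .
\end{equation*}
For any $\vec{S} \in \Rnxm{C}{C}$, not necessarily invertible, set $\boldsymbol{\delta}\vec{W}_O(\vec{S}) = -\vec{W}_O\vec{S}$ and $\boldsymbol{\delta}\vec{W}_V(\vec{S}) = \vec{S}\vec{W}_V$. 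These are exactly the tangent directions of the curve $t \mapsto (\vec{W}_O e^{-t\vec{S}}, e^{t\vec{S}}\vec{W}_V)$ through the invariance group. Substituting gives $-\vec{W}_O\vec{S}\vec{W}_V + \vec{W}_O\vec{S}\vec{W}_V = \vec{0}$. This avoids the SVD-based construction of the earlier lemma, though one could equally mimic it with SVDs of $\vec{W}_O$ and $\vec{W}_V$.

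It remains to show the map $\vec{S} \mapsto (-\vec{W}_O\vec{S}, \vec{S}\vec{W}_V)$ is linear and injective, so that its image has dimension $C^2$. Linearity is clear. Injectivity is the one point that needs a hypothesis, analogous to the full-rank assumption in the query-key lemma. Assume $C \le E$ and that $\vec{W}_V$ has full row rank $C$, or alternatively that $\vec{W}_O$ has full column rank $C$. Then $\vec{S}\vec{W}_V = \vec{0}$ forces $\vec{S} = \vec{0}$ by right-multiplying with a right inverse of $\vec{W}_V$. Hence the kernel is trivial and the subspace of vanishing directional derivatives has dimension at least $C^2$.

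The main obstacle is this rank condition, which the statement leaves implicit. I would state it explicitly, in parallel with the earlier lemma. I would also remark that under the stronger assumption that both $\vec{W}_O$ and $\vec{W}_V$ have rank $C$, the dimension is exactly $C^2$. In that case the differential of $(\vec{W}_O,\vec{W}_V) \mapsto \vec{W}_O\vec{W}_V$ maps onto the tangent space of rank-$C$ matrices, which has dimension $2EC - C^2$. Rank–nullity then gives a kernel of dimension $2EC - (2EC - C^2) = C^2$.
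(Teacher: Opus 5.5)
Your proof is correct. For the derivative claim it takes a different, more elementary route than the paper.

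The paper builds the null directions from SVDs $\vec{W}_O=\bar{\vec{U}}_O\bar{\boldsymbol{\Sigma}}_O\bar{\vec{V}}_O^\top$ and $\vec{W}_V=\bar{\vec{U}}_V\bar{\boldsymbol{\Sigma}}_V\bar{\vec{V}}_V^\top$, mirroring the query--key lemma. Its printed formulas have slips (mislabelled factors, a lost sign). Repaired, the directions are $\boldsymbol{\delta}\vec{W}_O=\bar{\vec{U}}_O\vec{S}\bar{\boldsymbol{\Sigma}}_V^{-1}\bar{\vec{U}}_V^\top$ and $\boldsymbol{\delta}\vec{W}_V=-\bar{\vec{V}}_O\bar{\boldsymbol{\Sigma}}_O^{-1}\vec{S}\bar{\vec{V}}_V^\top$, so the two terms of the derivative are $\pm\bar{\vec{U}}_O\vec{S}\bar{\vec{V}}_V^\top$.

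This is exactly your subspace in other coordinates. It equals $(-\vec{W}_O\vec{A},\vec{A}\vec{W}_V)$ with $\vec{A}=-\bar{\vec{V}}_O\bar{\boldsymbol{\Sigma}}_O^{-1}\vec{S}\bar{\boldsymbol{\Sigma}}_V^{-1}\bar{\vec{U}}_V^\top$, and $\vec{S}\mapsto\vec{A}$ is a linear bijection of $\Rnxm{C}{C}$.

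What each version buys:
\begin{itemize}
\item The SVD version needs both factors to have full rank just to be written down.
\item Your generators $(-\vec{W}_O\vec{S},\vec{S}\vec{W}_V)$ need no factorization and are defined at every point. They need only one full-rank factor for injectivity. They also show directly that the first-order null directions are the tangent space of the $GL_C(\R)$ orbit from the first half of the lemma.
\item Your rank--nullity remark goes beyond the paper, which exhibits a $C^2$-dimensional subspace but never shows it is the entire kernel.
\end{itemize}

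One refinement: the rank hypothesis is an artifact of your parametrization, not of the statement. Suppose $\vec{W}_O$ and $\vec{W}_V$ have ranks $C-a$ and $C-b$. Then the image $\{\vec{M}\vec{W}_V+\vec{W}_O\vec{N}\}$ has dimension $E(C-a)+E(C-b)-(C-a)(C-b)$. The kernel therefore has dimension $C^2+(E-C)(a+b)+ab\ge C^2$ at every point, provided $C\le E$.

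What cannot be dropped is $C\le E$ itself. For $C>E$ the kernel at a generic point has dimension $2EC-E^2<C^2$, so the lemma as stated fails there.
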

\begin{proof}
The first claim is trivial. The second is proved with SVDs in the same manner as before, with a minor notational change: 
\begin{equation*}
	\boldsymbol{\delta}\vec{W}_O = \bar{\vec{U}}_O\vec{S}\bar{\boldsymbol{\Sigma}}_V^{-1}\vec{U}_V^\top
	\quad\quad
	\boldsymbol{\delta}\vec{W}_V = \bar{\vec{V}}_O\vec{S}\bar{\boldsymbol{\Sigma}}_O^{-1}\vec{V}_V^\top
\end{equation*}
where $\vec{W}_O = \bar{\vec{U}}_O\bar{\boldsymbol{\Sigma}}_O\vec{U}_O^\top$ and $\vec{W}_V = \bar{\vec{U}}_O\bar{\boldsymbol{\Sigma}}_V\vec{U}_V^\top$ are SVDs.
\end{proof}

In an approach like that encoded in \texttt{nanoGPT}, $C = D$ and we again have $D^2$ unidentified degrees of freedom. In total, a multi-head attention of that sort with $D = E/H$ (for an $H$ that divides $E$) batches $4E^2 = 4EHD$ parameters for queries, keys, values, and output projection. But $2HD^2 = 2E^2/H$ of the parameters are unidentified, so that we can still only really train a fraction 
\begin{equation*}
	1 - \frac{2E^2/H}{4E^2} 
		= 1 - \frac{1}{2H}
\end{equation*}
of these into real effects. In other words, the overall result about the fraction of meaningful parameters has not changed. 

As an aside, we could consider $H=1$, $D = C = E$ a formulation of a single-head attention, with of course the same outcome about the identified fraction of parameters. 

\subsection{Invariance with Query-Key Bias Terms.}

If we include query-key biases we have to modify our invariance results slightly. Observe
\begin{align*}
	&(\vec{W}_K\vec{X} + \vec{b}_K\vec{1}^\top)^\top (\vec{W}_Q\vec{X} + \vec{b}_Q\vec{1}^\top)
	\\
	&\quad\quad\quad\quad
		= \vec{X}^\top \vec{W}_K^\top \vec{W}_Q \vec{X} 
			+ \vec{X}^\top \vec{W}_K^\top \vec{b}_Q \vec{1}^\top
			+ \vec{1} \vec{b}_K^\top \vec{W}_Q\vec{X}
			+ \big( \vec{b}_K^\top \vec{b}_Q \big) \vec{1} \vec{1}^\top
\end{align*}
The mixtures mean if we perturb $(\vec{W}_Q, \vec{W}_K)$ we also have to perturb $(\vec{b}_Q, \vec{b}_K)$. Specifically consider
\begin{align*}
	( \; (\vec{W}_Q,\vec{b}_Q) \;,\; (\vec{W}_K,\vec{b}_K) \; )
		\to ( \; (\vec{S}\vec{W}_Q,\vec{b}_Q) \;,\; (\vec{S}^{-\top}\vec{W}_K,\vec{b}_K) \; )
\end{align*}
resulting in query-key products
\begin{align*}
	\vec{X}^\top \vec{W}_K^\top \vec{W}_Q \vec{X} 
			+ \vec{X}^\top \vec{W}_K^\top \vec{S}^{-1} \vec{b}_Q \vec{1}^\top
			+ \vec{1} \vec{b}_K^\top \vec{S} \vec{W}_Q \vec{X}
			+ \big( \vec{b}_K^\top \vec{b}_Q \big) \vec{1} \vec{1}^\top
\end{align*}
which is not an invariant result. But
\begin{align*}
	( \; (\vec{W}_Q,\vec{b}_Q) \;,\; (\vec{W}_K,\vec{b}_K) \; )
		\to ( \; ( \vec{S}\vec{W}_Q,\vec{S} \vec{b}_Q ) \;,\; ( \vec{S}^{-\top}\vec{W}_K, \vec{S}^{-\top}\vec{b}_K) \; )
\end{align*}
for which we would have 
\begin{align*}
	&\vec{X}^\top \vec{W}_K^\top \vec{W}_Q \vec{X} 
			+ \vec{X}^\top \vec{W}_K^\top \vec{S}^{-1} (\vec{S} \vec{b}_Q) \vec{1}^\top
			+ \vec{1} (\vec{S}^{-\top} \vec{b}_K)^\top \vec{S} \vec{W}_Q \vec{X}
			+ \vec{1} (\vec{S}^{-\top} \vec{b}_K)^\top (\vec{S} \vec{b}_Q) \vec{1}^\top
	\\
	&\quad\quad\quad\quad
		= \vec{X}^\top \vec{W}_K^\top \vec{W}_Q \vec{X} 
			+ \vec{X}^\top \vec{W}_K^\top (\vec{S}^{-1} \vec{S}) \vec{b}_Q \vec{1}^\top
			+ \vec{1} \vec{b}_K^\top (\vec{S}^{-1} \vec{S}) \vec{W}_Q \vec{X}
			+ \vec{1} \vec{b}_K^\top (\vec{S}^{-1} \vec{S}) \vec{b}_Q \vec{1}^\top
	\\
	&\quad\quad\quad\quad
		= (\vec{W}_K\vec{X} + \vec{b}_K\vec{1}^\top)^\top (\vec{W}_Q\vec{X} + \vec{b}_Q\vec{1}^\top)
\end{align*}

\subsection{Irrelevant Bias Terms.} 

Most of the above has ignored bias terms. \citet{vaswani2023} don't mention bias terms, perhaps suggesting they can be excluded in transformations related to attention. But code like \texttt{nanoGPT} \citep{karpathy2022} includes them, at least optionally, and the review from Google Brain \citep{phuong2022} explicitly admits biases on all terms. Generally speaking tooling like \texttt{pytorch} \citep{paszke2019} is pretty liberal in allowing trainable linear parameters in various operations, in principle admitting identification-confounding linear operations like those discussed above. 

The purpose of this section is to cover some basic results related in character but not substance: bias terms related to the key values are irrelevant in any $\Softmax$-based attention head, and bias related to value projections are irrelevant in any multi-head attention head. While much of the above derives from analysis of the quadratic form implied by query-key products, the $\Softmax$ has it’s own “invariant subspaces”. In particular, a $\Softmax$ is invariant to value changes that are constant over columns with one of the biases we might use having only a constant effect on columns. Multi-head attention introduces another invariance in biases with output pooling. 

The number of parameters we can omit from these observations about biases is small relative to the "second order" invariances in query-key products. We never have to include $HD = E$ (when $C = D$ and $H$ divides $E$) key biases or $HC$ value biases per multi-head attention layer, or when $C = D$ only $2E$ parameters per layer in total. 

\subsubsection{Key Bias Has No Effect.}

As is well known any uniform shift in all exponentiated "logits" doesn't affect a logistic/$\Softmax$: 
\begin{fact}
For any numbers $u_1,\dotsc,u_N$
\begin{equation*}
	\frac{ \exp\{u_n + \delta\} }{ \sum_m \exp\{ u_m + \delta \} }
		= \frac{ e^\delta \exp\{u_n\} }{ \sum_m e^\delta \exp\{ u_m \} }
		= \frac{ e^\delta \exp\{u_n\} }{ e^\delta \sum_m \exp\{ u_m \} }
		= \frac{ \exp\{u_n\} }{ \sum_m \exp\{ u_m \} }
\end{equation*}
\end{fact}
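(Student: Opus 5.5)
The Fact asks only that a common additive shift $\delta$ in every logit leaves the $\Softmax$ unchanged. The plan is a direct algebraic verification, in the order the displayed chain of equalities suggests.

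First, I will split each shifted exponential using the homomorphism property of the exponential, $\exp\{u_n+\delta\}=e^\delta\exp\{u_n\}$. I will apply this both to the numerator and to every term $\exp\{u_m+\delta\}$ of the denominator sum.

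Next, I will use linearity of the finite sum to pull the common constant out, $\sum_m e^\delta \exp\{u_m\} = e^\delta\sum_m \exp\{u_m\}$.

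Finally, I will cancel $e^\delta$ from numerator and denominator. This cancellation is legitimate because $e^\delta>0$, and because $\sum_m\exp\{u_m\}>0$ (each term is positive and $N\ge 1$), so the quotient is well defined and nonzero throughout.

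There is no real obstacle here. The only point worth stating explicitly is this positivity, which justifies the cancellation and shows the ratio is defined for every real $\delta$ and every choice of $u_1,\dotsc,u_N$.

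For the use that follows, I will also note the consequence for the column-wise $\SoftmaxCol$. Any perturbation of the logit matrix of the form $\vec{1}\vec{c}^\top$, which is constant down each column, is invariant. This is because each column is normalized separately, so the Fact applies column by column with $\delta=c_\ell$. The $\vec{1}\vec{b}_K^\top\vec{W}_Q\vec{X}$ and $\vec{1}\vec{b}_K^\top\vec{b}_Q\vec{1}^\top$ terms from the bias expansion are exactly of this form.
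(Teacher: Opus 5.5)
Your proof is correct and is the paper's own argument, since the displayed chain is itself the proof: factor $e^\delta$ out of each term via $\exp\{u+\delta\}=e^\delta\exp\{u\}$, pull it out of the finite sum, and cancel. Your positivity justification and your column-wise extension to perturbations of the form $\vec{1}\vec{c}^\top$ are also right, and they match how the paper later uses the Fact to drop the $\vec{1}\vec{b}_K^\top\vec{W}_Q\vec{X}$ and $(\vec{b}_K^\top\vec{b}_Q)\vec{1}\vec{1}^\top$ terms.
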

This is actually the basis of a classical computational trick (used in pytorch and most serious implementations of logistics, expit, or softmax) to assert the exponents are never larger than one: just subtract away the maximum value from each value prior to exponentiating to avoid overflows.

Inside a $\SoftmaxCol$ operation we're going to effectively do something like
\begin{equation*}
	\frac{ \exp\{ \vec{k}_d^\top \vec{q}_\ell \} }{ \sum_c \exp\{\vec{k}_c^\top \vec{q}_\ell\} }
\end{equation*}
which will have the same property. We leave off the sum's upper limit to emphasize this could be ``causal'' or not. Again suppose we added query-key biases, 
\begin{align*}
	\vec{K}(\vec{X})^\top \vec{Q}(\vec{X})
		&= \left( \vec{W}_K\vec{X} + \vec{b}_K\vec{1}^\top  \right)^\top \left( \vec{W}_Q\vec{X} + \vec{b}_Q\vec{1}^\top \right)
		\\
		&= \vec{X}^\top \vec{W}_K^\top \vec{W}_Q \vec{X} 
			+ \vec{X}^\top \vec{W}_K^\top \vec{b}_Q \vec{1}^\top
			+ \vec{1} \vec{b}_K^\top \vec{W}_Q \vec{X}
			+ \left( \vec{b}_K^\top \vec{b}_Q \right) \vec{1} \vec{1}^\top
\end{align*}
Note that the last term, $( \vec{b}_K^\top \vec{b}_Q) \vec{1} \vec{1}^\top$, is a constant applied to all rows and columns. Note also the third term, $\vec{1} \vec{b}_K^\top \vec{W}_Q \vec{X}$, is constant over columns but may differ over rows. So when we take $\SoftmaxCol$ over this result, those two terms will be irrelevant (drop out of the exponential fractions), and only the query bias $\vec{b}_Q$ will have an impact on the attention layer output. 

\subsubsection{Value Bias is Unidentified in a Multi-Head Attention}

Suppose we use multi-head attention and include value biases, 
\begin{equation*}
	\vec{W}_V^h \vec{Y}^h + \vec{b}_V^h \vec{q}^\top
	\quad\quad
	\vec{Y}^h(\vec{X}) = \vec{X} \SoftmaxCol\left( \vec{K}^h(\vec{X})^\top\vec{Q}^h(\vec{X}) \right)
\end{equation*}
keeping in mind we will then pool head effects in an affine form
\begin{equation*}
	\sum_{h=1}^H \vec{W}_O^h \left( \vec{W}_V^h \vec{Y}^h(\vec{X}) + \vec{b}_V^h \vec{1}^\top \right) + \vec{b}_O \vec{1}^\top
\end{equation*}
This is obviously expanded to
\begin{equation*}
	\sum_{h=1}^H \vec{W}_O^h \vec{W}_V^h \vec{Y}^h(\vec{X}) + \vec{b}_{V,O} \vec{1}^\top
	\quad\quad
	\vec{b}_{V,O} = \sum_{h=1}^H \vec{W}_O^h \vec{b}_V^h + \vec{b}_O
\end{equation*}
Numerical training will not be able to estimate anything other than $\vec{b}_{V,O}$, not the individual head (or output) biases. In other words, we might as well simple drop the head-value biases and just use $\vec{b}_O$. 

\subsection{Extensions.}

In this short section we discuss a few extensions related to relevant literature.

\subsubsection{Using Larger Heads.}

Suppose, say following \citet{bhojanapalli2020}, we let the per-head outputs be larger than the ``conventional'' $E/H$. We could look at this as a ``representational fan out'' in the heads. The per head identification result about invariance is the same. Along with larger heads we get ``less effect per parameter'' in this sense. So let's take an extreme: $D = E$ for all $H$ heads. This is actually the limiting extreme, as we can't have $D > E$ without purposefully introducing unidentified interactions. Because the fraction of unidentified parameters is like $D/(2E)$, with $1/(2H)$ falling out of the architectural choice $D = E/H$. When $D = E$ this becomes $1/2$ {\em just like a single-head attention}. This potentially exemplifies a balance the literature has not yet observed: with $H = E$ and $D = E/H = 1$ we would have the highest fraction of identified parameters but have the most low rank representations; with $D = E$ and any $H$ we have ``high rank'' representations but also the lowest fraction of identified parameters. Somewhere in between these effects likely balance each other. A coarse review of their models used for ablation studies is in Fig. \ref{fig:bhojanapalli2020-ablation} shows that most of the {\em incremental} benefit happens with more identified parameters, not with significantly larger-than-conventional heads. 

\begin{figure}[h]
    \centering
\begin{tabular}{rcrrrr}
  \hline
  $d_p$ & Total Params & Attn. & Unidentified & $\triangle$ F1 & $\triangle$ EM \\
  \hline
	 32  &  130 &   12.6 &     0.26M (0.20\%) & & \\
	 64  &  142 &   25.2 &     1.05M (0.74\%) & 0.98 & 1.22 \\
	128  &  168 &   50.4 &     4.20M (2.50\%) & 0.09 & 0.32 \\
	256  &  218 &  100.9 &    16.82M (7.71\%) & 0.73 & 0.63 \\
  \hline
\end{tabular}
    \caption{Review of SQuAD ablation studies in \citet{bhojanapalli2020} from our lens of attention parameter identification. Note $d_p$ is their notation, and $d_p = 64$ is the ``conventional'' setting $D = E/H$.}
    \label{fig:bhojanapalli2020-ablation}
\end{figure}

\subsubsection{RoFormer/RoPE.}

RoPE \citep{su2023} is effectively the following modification: 
\begin{equation*}
	( \vec{X}^\top\vec{W}_K^\top\vec{W}_Q\vec{X} \big)_{m,n}
		\mapsto \vec{x}_m^\top\vec{W}_K^\top \vec{R}_{n-m} \vec{W}_Q \vec{x}_n
\end{equation*}
for a family of orthogonal matrices $\{\vec{R}_{n-m}\}$. Of course the specific elements and encoding grounded in $2\times 2$ rotations is critical to the approach. Each sequence pair mapping will appear to have a similar invariance to that discussed above, 
\begin{equation*}
	\vec{W}_K^\top \vec{R}_{n-m} \vec{W}_Q
		\equiv (\vec{S}^{-\top}\vec{R}_{n-m}^\top\vec{W}_K)^\top \vec{R}_{n-m} (\vec{R}_{n-m}^{-1}\vec{S}\vec{W}_Q)
\end{equation*}
but not all valid at once. So we can consider group transformations like $(\vec{W}_Q,\vec{W}_K) \mapsto (\vec{A}\vec{W}_Q,\vec{B}\vec{W}_K)$ for which we must have 
\begin{equation*}
	\vec{W}_K^\top \vec{R}_{n-m} \vec{W}_Q
		= \vec{W}_K^\top \vec{B}^\top \vec{R}_{n-m} \vec{A} \vec{W}_Q
\end{equation*}
and thus $\vec{A} = \vec{R}_{n-m}^{-1} \vec{B}^{-\top} \vec{R}_{n-m}$ for all valid $n,m$. $\vec{A}$ can't depend on $n-m$ itself, and thus $\vec{A}$ (effectively) has to satisfy a set of commutation relationships. 

So, suppose $\vec{S} \in \Rnxm{D}{D}$ commutes with all $\vec{R}_{n-m}$ ($\vec{R}_{n-m} \vec{S} = \vec{S}\vec{R}_{n-m}$); then
\begin{align*}
	(\vec{S}^{-\top}\vec{W}_K)^\top \vec{R}_{n-m} (\vec{S}\vec{W}_Q)
		&= \vec{W}_K^\top \vec{S}^{-1} \vec{R}_{n-m} \vec{S} \vec{W}_Q \\
		&= \vec{W}_K^\top \vec{S}^{-1} \vec{S} \vec{R}_{n-m} \vec{W}_Q \quad \text{(commutativity)} \\
		&= \vec{W}_K^\top \vec{R}_{n-m} \vec{W}_Q
\end{align*}
So our now-valid family of $D \times D$ matrices has additional constraints related to commutativity. For the particular $D/2$ planar rotations $\vec{R}_{n-m}$ in RoPE commuting matrices have two free numbers: any other rotation over the same subspace and a scaling. Handwaving a bit, that means an $\vec{S}$ in an effectively similar $2 \times 2$ block-diagonal form encoded in complex notation as $\alpha_d e^{i\theta_d}$ for $D/2$ scalings $\alpha_d$ and rotations $\theta_d$, having a total of $D$ not $D^2$ degrees of freedom. 

We've effectively shown:
\begin{prop}
Using a RoPE encoding in a transformer we have $D$-dimensional invariance in query-key products ($(\C \setminus \{0\})^{D/2}$ instead of $GL_D(\R)$) of the form 
\begin{equation*}
	\vec{S} = \begin{pmatrix}
		\alpha_1 e^{i\theta_1} & \vec{0} & \dotsb & \vec{0} \\
		\vec{0} & \alpha_2 e^{i\theta_2} & \dotsb & \vec{0} \\
		\vdots & \vdots & \ddots & \vdots \\
		\vec{0} & \vec{0} & \dotsb & \alpha_{D/2} e^{i\theta_{D/2}} \\
	\end{pmatrix}
\end{equation*}
\end{prop}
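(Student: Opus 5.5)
The plan is to recast the invariance question for RoPE as a commutant computation. We take the RoPE family explicitly as block-diagonal matrices
\begin{equation*}
	\vec{R}_k = \mathrm{diag}\big(\vec{J}(k\omega_1), \dotsc, \vec{J}(k\omega_{D/2})\big),
\end{equation*}
where $\vec{J}(\phi)$ is the $2\times 2$ planar rotation by angle $\phi$. The frequencies $\omega_d$ are assumed distinct and nonzero modulo $\pi$, as in the geometric schedule of \citet{su2023}.

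First, sufficiency. By the computation just preceding the statement, any nonsingular $\vec{S}$ that commutes with every $\vec{R}_{n-m}$ gives an invariant pair $(\vec{S}\vec{W}_Q,\vec{S}^{-\top}\vec{W}_K)$. So it suffices to check that every block-diagonal $\vec{S}$ with $2\times 2$ blocks of the form $\alpha_d\vec{J}(\theta_d)$, with $\alpha_d>0$, commutes with each $\vec{R}_k$. This holds because planar rotations commute with each other and with scalars. Under the identification of $\R^2$ with $\C$, such a block is exactly multiplication by $\alpha_d e^{i\theta_d}$, which is the form displayed in the proposition.

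Second, necessity, which is where the dimension count $D$ comes from. Suppose $\vec{S}$ commutes with $\vec{R}_1$ (the case $n-m=1$ suffices). Partition $\vec{S}$ into $2\times 2$ blocks $\vec{S}_{de}$. The commutation relation becomes
\begin{equation*}
	\vec{J}(\omega_d)\vec{S}_{de} = \vec{S}_{de}\vec{J}(\omega_e) \quad \text{for all } d,e.
\end{equation*}
Over $\C$, $\vec{J}(\omega)$ has eigenvalues $e^{\pm i\omega}$. When $d\neq e$, the distinct-frequency assumption means $\vec{J}(\omega_d)$ and $\vec{J}(\omega_e)$ share no eigenvalue. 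The standard Sylvester-equation argument then forces $\vec{S}_{de}=\vec{0}$. When $d=e$, $\vec{J}(\omega_d)$ is not a multiple of the identity, because $\omega_d\not\equiv 0 \pmod \pi$. Its commutant in $\Rnxm{2}{2}$ is then the two-dimensional algebra $\{a\vec{I}+b\vec{J}(\pi/2)\}\cong\C$, whose nonzero elements are exactly the $\alpha e^{i\theta}$ blocks.

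Invertibility of $\vec{S}$ forces every block to be nonzero. The invariance group is therefore $(\C\setminus\{0\})^{D/2}$, a real Lie group of dimension $D$.

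Finally, I would address the more general pairs $(\vec{A}\vec{W}_Q,\vec{B}\vec{W}_K)$ raised before the statement. Here I would take $\vec{W}_Q$ and $\vec{W}_K$ of full row rank, as in the earlier directional-derivative lemma, so that both can be cancelled from the identity. This gives $\vec{B}^\top\vec{R}_k\vec{A}=\vec{R}_k$ for all $k$ that occur, including $k=0$. The case $k=0$ yields $\vec{B}^\top=\vec{A}^{-1}$. Substituting back gives $\vec{A}^{-1}\vec{R}_k\vec{A}=\vec{R}_k$, so we are back in the commutant case with $\vec{S}=\vec{A}$.

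The main obstacle is the genericity hypothesis on the frequencies. If two $\omega_d$ coincide modulo $\pm$, or if some $\omega_d\equiv 0 \pmod\pi$, the commutant becomes larger (for example, a $\mathrm{GL}_2(\C)$ block appears) and the count exceeds $D$. So I would state the distinctness assumption explicitly and note that it holds for the standard RoPE frequencies.
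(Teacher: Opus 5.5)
Your proposal is correct and follows the paper's route. You reduce general pairs $(\vec{A}\vec{W}_Q,\vec{B}\vec{W}_K)$ to the requirement that $\vec{A}$ commute with every $\vec{R}_{n-m}$ (your $k=0$ step is the clean version of the paper's remark that $\vec{A}$ cannot depend on $n-m$), and then identify that commutant as the block-diagonal $\alpha_d e^{i\theta_d}$ matrices, supplying the Sylvester-equation argument and frequency hypothesis that the paper openly handwaves. One small tightening: the condition you actually use (and need) is $\omega_d\not\equiv\pm\omega_e \pmod{2\pi}$ for $d\neq e$ together with $\omega_d\not\equiv 0 \pmod{\pi}$, rather than ``distinct modulo $\pi$''; the standard RoPE schedule satisfies it.
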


\subsection{Grouped Query Attention.}

We can sketch GQA as defining $G$ key groups for $H$ heads using
\begin{equation*}
	(\vec{W}_K^{g(h)})^\top\vec{W}_Q^{h}
	\quad\quad
	h = 1,\dotsc,H
	\;\;
	g : \{1,\dotsc,H\} \to \{1,\dotsc,G\}
\end{equation*}
There will thus similar symmetry except every group $G$ has its own. That is, 
\begin{equation*}
	(\vec{W}_K^{g(h)})^\top\vec{W}_Q^{h}
		\equiv ((\vec{S}^{g(h)})^{-\top}\vec{W}_K^{g(h)})^\top\vec{S}^{g(h)}\vec{W}_Q^{h}
\end{equation*}
So we have the same identification properties {\em per group}, not {\em per head}. The identifications are then
\begin{equation*}
	\vec{W}_Q^{h} \mapsto \vec{S}^{g(h)} \vec{W}_Q^{h}
	\quad,\quad
	\vec{W}_K^{g} \mapsto (\vec{S}^{g})^{-\top} \vec{W}_K^{g}
\end{equation*}
for $h = 1,\dotsc,H$ and $g = 1,\dotsc,G$. This is a complete description of the invariance, any other transformation pairs $\{\vec{A}^h\}_h$ and $\{\vec{B}_g\}_g$ would need to satisfy $A_h = B_{g(h)}^{-\top}$. When $G < H$ this is a proper constraint and will decrease the count of unidentified parameters from $H D^2$ to $GD^2$. GQA thus cuts unidentified parameters like $H/G$. 

RoPE reduces symmetry and thus improves identification by a rotation not all transformations commute with, while GQA reduces it by tying heads together so they can no longer transform independently. Some models use both RoPE and GQA meaning group-identified invariance that also commute with the rotations, where the two effects compose: a RoPE + GQA layer has $GD$ unidentified parameters against $H \cdot D^2$ for more basic transformers. Specifically for Llama-2-70B parameters ($H=64, G=8, D=128$) that is factor of $8$ difference: $8 \times 128^2=131,072$ versus $1,048,576$. 

\subsection{Equivalence.}

Returning to multi-head attention (MHA), we could ask the architectural question of whether there is a comparable single-head attention (SHA) from a purely identification standpoint. In other words, given $(E, H, D)$ (fixing $C = D$) are there choices of $(E_0,D_0)$ that have the same ``effective'' parameter counts? Here we will ignore bias for simplicity, and look to equate
\begin{equation*}
	(2D_0+E_0)E_0 - D_0^2 = 4EHD - 2HD^2
\end{equation*}

Consider first the default case where $D = C = E/H$, $H$ divides $E$, and $D_0 = E_0$. Then we effectively parameterize the MHA by $(E, H)$, have $4E^2$ parameters with $2E^2/H$ unidentified. The SHA has $3E_0^2$ parameters with $E_0^2$ unidentified. We can equate these as 
\begin{equation*}
	E_0 = \left\lceil \; E \sqrt{2 - \frac{1}{H}} \; \right\rceil
\end{equation*}
For $H = 1$ we obviously recover SHA, for 
\begin{equation*}
	E_0 = \left\lceil \; E \sqrt{2 - \frac{1}{1}} \; \right\rceil
		= \left\lceil \; E \sqrt{1} \; \right\rceil
		= E
\end{equation*}
For $H > 1$ this also plainly means we need a {\em larger} embedding dimension in SHA than in MHA. While using a larger embedding dimension is possible that will influence the {\em entire network} which we have mostly ignored. In the simplified case the inflation in model size could be large, roughly 60\% larger for $12$ heads. If executed that will result in a false comparison between the MHA-matched SHA because the capacity of other components like the residual skip layers and MLP's in a full transformer will have altered capacity as well. 

Imposing this constraint, $E_0 = E$, more generally we can get a no-go or impossibility result. Set $D_0 = \eta E_0$ to prove this. We would have
\begin{equation*}
	(1 + 2\eta - \eta^2)E^2 = 4EHD - 2HD^2
\end{equation*}
or 
\begin{equation*}
	1 + 2\eta - \eta^2 
		= 2H \left( 2 - \frac{D}{E} \right) \left( \frac{D}{E} \right)
\end{equation*}
When $D = E/H$, $D/E = 1/H$ and
\begin{equation*}
	1 + 2\eta - \eta^2 
		= 2\left( 2 - \frac{1}{H} \right)
\end{equation*}
The LHS is a concave quadratic whose maximum is $2$ (at $\eta = 1$), whereas the RHS takes the value $3$ at $H = 2$ (and increases from there); $H = 1$ again is a special case recovering the obvious equivalence. In other words, there is no comparable SHA for a given MHA with more than one head {\em without} changing, particularly increasing, the embedding dimension. 


\section{Numerical Examples}

We have shown that there are specific “global” as well as locally linearly invariances in attention, it is not clear whether gradient-based updates {\em avoid} these subspaces. After all, shouldn’t a “direction of greatest increase” in some loss translate to implicitly avoiding subspaces over which there is no change? Moreover, the increase in relative identification from using a multi-head attention means more parameters are more useful to predictions. Moreover, can we evaluate how influential the improved identification from multi-head attention alone is, to gauge how much else we should be further ``reading into'' the specific functional form of the attention?

\subsection{Gradient Flow.}

Consider a gradient flow like 
\begin{equation*}
	\frac{d}{dt}(\vec{W}_Q,\vec{W}_K) = - \frac{\partial \mathcal{L}}{\partial (\vec{W}_K,\vec{W}_Q)}
\end{equation*}
for a loss function $\mathcal{L}$. 

\begin{lem}
$\vec{W}_Q\vec{W}_Q^\top - \vec{W}_K\vec{W}_K^\top$ is an invariant under gradient flow for a differentiable loss.
\end{lem}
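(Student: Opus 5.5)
The approach is to use that the loss depends on $(\vec{W}_Q,\vec{W}_K)$ only through the product $\vec{M} = \vec{W}_K^\top\vec{W}_Q$, possibly together with other parameters that are not being flowed here. Writing $\mathcal{L}(\vec{W}_Q,\vec{W}_K) = \ell(\vec{W}_K^\top\vec{W}_Q)$ and $\vec{G} = \partial \ell/\partial \vec{M} \in \Rnxm{E}{E}$, the chain rule gives
\begin{equation*}
	\frac{\partial \mathcal{L}}{\partial \vec{W}_Q} = \vec{W}_K\vec{G},
	\quad\quad
	\frac{\partial \mathcal{L}}{\partial \vec{W}_K} = \vec{W}_Q\vec{G}^\top .
\end{equation*}
I will verify these by differentiating $\mathrm{tr}(\vec{G}^\top \delta\vec{M})$ with $\delta\vec{M} = \vec{W}_K^\top\boldsymbol{\delta}\vec{W}_Q + \boldsymbol{\delta}\vec{W}_K^\top\vec{W}_Q$. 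This is exactly the directional derivative already computed in the Directional Derivatives subsection, with the outer $\vec{X}^\top(\cdot)\vec{X}$ absorbed into $\ell$, and with biases, the $\Softmax$, masking and downstream layers all absorbed into $\ell$ as well.

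Next I substitute into the flow, $\dot{\vec{W}}_Q = -\vec{W}_K\vec{G}$ and $\dot{\vec{W}}_K = -\vec{W}_Q\vec{G}^\top$, and differentiate the candidate invariant with the product rule:
\begin{equation*}
	\frac{d}{dt}\left(\vec{W}_Q\vec{W}_Q^\top - \vec{W}_K\vec{W}_K^\top\right)
	= -\vec{W}_K\vec{G}\vec{W}_Q^\top - \vec{W}_Q\vec{G}^\top\vec{W}_K^\top + \vec{W}_Q\vec{G}^\top\vec{W}_K^\top + \vec{W}_K\vec{G}\vec{W}_Q^\top = \vec{0}.
\end{equation*}
The terms cancel pairwise, so the $D\times D$ matrix is constant along trajectories. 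Here I read the paper's flow as each block moving along minus its own partial derivative; the subscript ordering in the displayed equation is just notation.

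The only real obstacle is bookkeeping: getting the transposes in the two partial derivatives right, given the paper's column convention $\vec{K}^\top\vec{Q}$. If the gradient were off by a transpose, the cross terms would fail to cancel. I will therefore check the derivatives against the $E=1$ toy example, where $\omega = \vec{W}_K^\top\vec{W}_Q$ and the flow $\dot{\vec{W}}_Q = -g\vec{W}_K$, $\dot{\vec{W}}_K = -g\vec{W}_Q$ visibly conserves $\|\vec{W}_Q\|^2-\|\vec{W}_K\|^2$.

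It is also worth noting why the result holds. The invariant is the moment map of the $GL_D(\R)$ symmetry from the earlier invariance lemma. Pairing $\vec{W}_Q\vec{W}_Q^\top - \vec{W}_K\vec{W}_K^\top$ with any $\vec{S}$ gives the inner product of the gradient with the infinitesimal symmetry direction $(\vec{S}\vec{W}_Q,-\vec{S}^\top\vec{W}_K)$. That inner product vanishes because $\mathcal{L}$ is constant along the group orbit, which is a Noether-type argument and serves as an independent check.
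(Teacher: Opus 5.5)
Your proof is correct and is essentially the paper's own argument: the chain rule gives $\partial\mathcal{L}/\partial\vec{W}_Q = \vec{W}_K\vec{G}$ and $\partial\mathcal{L}/\partial\vec{W}_K = \vec{W}_Q\vec{G}^\top$, and the product rule then gives identical time derivatives for $\vec{W}_Q\vec{W}_Q^\top$ and $\vec{W}_K\vec{W}_K^\top$. One side remark needs correcting: a query bias cannot be absorbed into $\ell$, because the logit term $\vec{x}_c^\top\vec{W}_K^\top\vec{b}_Q$ varies with $c$ and survives the $\SoftmaxCol$; with $\vec{b}_Q \neq \vec{0}$ the conserved quantity is instead $\vec{W}_Q\vec{W}_Q^\top + \vec{b}_Q\vec{b}_Q^\top - \vec{W}_K\vec{W}_K^\top - \vec{b}_K\vec{b}_K^\top$ (your argument applied to the augmented matrices $(\vec{W}_Q \;\, \vec{b}_Q)$ and $(\vec{W}_K \;\, \vec{b}_K)$), so the lemma as stated relies on there being no query bias, which the paper's proof implicitly assumes.
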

This also dates back to \citet{arora2018} as ``imbalance'' in a cross-layer setting.
\begin{proof}
Recognizing that the loss only depends on $\vec{W}_K^\top\vec{W}_Q$ and letting $\vec{G} = \partial \mathcal{L} / \partial (\vec{W}_K^\top\vec{W}_Q)$ the chain rule implies
\begin{equation*}
	\frac{\partial \mathcal{L}}{\partial \vec{W}_Q} = \vec{W}_K \vec{G}
	\quad\quad
	\frac{\partial \mathcal{L}}{\partial \vec{W}_K} = \vec{W}_Q \vec{G}^\top
\end{equation*}
Thus
\begin{equation*}
	\frac{d}{dt}(\vec{W}_Q\vec{W}_Q^\top) 
		= - \left( \frac{\partial \mathcal{L}}{\partial \vec{W}_Q} \vec{W}_Q^\top 
			+ \vec{W}_Q \frac{\partial \mathcal{L}}{\partial \vec{W}_Q}^\top \right)
		= - \left( \vec{W}_K\vec{G}\vec{W}_Q^\top + \vec{W}_Q\vec{G}^\top\vec{W}_K^\top \right)
\end{equation*}
and similarly
\begin{equation*}
	\frac{d}{dt}(\vec{W}_K\vec{W}_K^\top) 
		= - \left( \frac{\partial \mathcal{L}}{\partial \vec{W}_K} \vec{W}_K^\top 
			+ \vec{W}_K \frac{\partial \mathcal{L}}{\partial \vec{W}_K}^\top \right)
		= - \left( \vec{W}_Q\vec{G}^\top\vec{W}_K^\top + \vec{W}_K\vec{G}\vec{W}_Q^\top \right)
\end{equation*}
using gradient flow for $d/dt \vec{W}_Q$ and $d/dt \vec{W}_K$. This obviously implies 
\begin{equation*}
	\frac{d}{dt}(\vec{W}_Q\vec{W}_Q^\top) = \frac{d}{dt}(\vec{W}_K\vec{W}_K^\top) 
\end{equation*}
from which the result is simply this as
\begin{equation*}
	\frac{d}{dt}(\vec{W}_Q\vec{W}_Q^\top - \vec{W}_K\vec{W}_K^\top) = \vec{0}.
\end{equation*}
\end{proof}
Ignoring numerical concerns, this means we should be able to quantify the degree to which an actual training sequence ``uses'' unidenfitied parameters via computing $\frac{d}{dt}|| \vec{W}_Q\vec{W}_Q^\top - \vec{W}_K\vec{W}_K^\top ||_F$ so to speak. Using the theoretical gradients this should be zero (the norm should be constant-ish). But adaptive methods can deviate from this ``conservation law''. 

So consider a least squares problem
\begin{equation*}
	\min_{\vec{S} \in \Rnxm{D}{D}}
		\frac{1}{2}
		\Bigg\{
			|| \delta \vec{W}_Q - \vec{S}\vec{W}_Q ||_F^2
			+ || \delta \vec{W}_K - \vec{S}^\top\vec{W}_K ||_F^2
		\Bigg\}
\end{equation*}
Stationarity would require
\begin{equation*}
	\vec{S}\vec{W}_Q\vec{W}_Q^\top + \vec{W}_K\vec{W}_K^\top \vec{S}
		= \left( \delta \vec{W}_Q \vec{W}_Q^\top - \vec{W}_K \delta\vec{W}_K^\top \right)
\end{equation*}
with a computable RHS yielding a Sylvester equation. There is a unique solution when the Gram matrices $\vec{G}_Q = \vec{W}_Q\vec{W}_Q^\top$ and $-\vec{G}_K = -\vec{W}_K\vec{W}_K^\top$ do not share eigenvalues. If we then compute eigendecompositions
\begin{equation*}
	\vec{G}_Q\vec{U}_Q = \vec{U}_Q\boldsymbol{\Lambda}_Q
	\quad\quad
	\vec{G}_K\vec{U}_K = \vec{U}_K\boldsymbol{\Lambda}_K
\end{equation*}
we have some eigenvalues $\lambda_{Q,d},\lambda_{K,d} \geq 0$ for $d=1,\dotsc,D$ from positive semi-definiteness of Gram matrices with strict inequality when the weights are full rank. Then there is a unique $\vec{S}$ when
\begin{equation*}
	\lambda_{Q,d} - (- \lambda_{K,d^\prime}) 
		= \lambda_{Q,d} + \lambda_{K,d^\prime}
		\neq 0
\end{equation*}
for all $d,d^\prime$. This will hold so long as {\em at least one} of the weight matrices is full rank, because then $\min_d \lambda_{Q,d} > 0$ or $\min_d \lambda_{K,d} > 0$. We thus report a value
\begin{equation*}
	\mu = \min_d \lambda_{Q,d} + \min_d \lambda_{K,d}
\end{equation*}
which quantifies rank deficiency that should correlate to {\em invalidating} $D^2$ unidentified degrees of freedom. 

Moreover when $\mu > 0$ we can solve for $\vec{S} = \vec{U}_K \tilde{\vec{S}} \vec{U}_Q^\top$ via
\begin{align*}
	\tilde{\vec{S}} \boldsymbol{\Lambda}_Q
			+ \boldsymbol{\Lambda}_K \tilde{\vec{S}}
		= \vec{R}
	\quad,\quad
		\vec{R} = \vec{U}_K^\top 
			\left( \delta \vec{W}_Q \vec{W}_Q^\top - \vec{W}_K \delta\vec{W}_K^\top \right)
			\vec{U}_Q
\end{align*}
which is a calculation of the RHS followed by elementwise divisions:
\begin{align*}
	\big( \tilde{\vec{S}} \big)_{d,d^\prime}
		= \frac{ ( \vec{R} )_{d,d^\prime} }{ \lambda_{Q,d} + \lambda_{K,d^\prime} }
\end{align*}
Thus we have a formal solution $\vec{S}$ to the least squares problem, a measure
\begin{align*}
	\rho = \frac{ 
		|| \vec{S} \vec{W}_Q ||_F^2 + || \vec{S}^\top \vec{W}_K ||_F^2
	 }{ 
		|| \delta\vec{W}_Q ||_F^2 + || \delta\vec{W}_K ||_F^2
	}
	\leq 1
\end{align*}
that describes a normalized projection length, and even an update modification if we were interested. 

Note that eigendecompositions of the Gram matrices are relatively feasible being sized by $D$. In our specific numerical examples from training we have $D = 64$ for which these operations are trivial on modern computers. 

We will also report 
\begin{align*}
	\beta = || \vec{W}_Q\vec{W}_Q^\top - \vec{W}_K\vec{W}_K^\top ||_F
\end{align*}
which over a training run measures how conserved the conserved quantity is and
\begin{align*}
	\tau = \mathrm{trace}\big( \vec{W}_Q\vec{W}_Q^\top - \vec{W}_K\vec{W}_K^\top \big)
\end{align*}
which, alongside $\beta$, shows if deviation from conservation is signed. 

\subsection{Training Runs.}

We run training experiments on relatively ``stock'' \texttt{nanoGPT} \citep{karpathy2022} with the tiny shakespeare dataset using character tokens. This is to avoid questions that could arise from a completely custom implementation as well as include effects like dropout to show their influence (or not). Scale should not be an important factor for empirical validation of our results, so small scale tests are enough.

\subsubsection{Training Cases.}

We run four cases to empirically explore our theoretical results:
\begin{itemize}
\item[(1)] SGD with no weight decay and a learning rate of 0.25
\item[(2)] SGD with small weight decay $4\times 10^{-4}$ and a learning rate of 0.25 (to roughly match case 3)
\item[(3)] AdamW (the default) with weight decay 0.1 and a learning rate of $1\times 10^{-3}$
\item[(4)] AdamW with weight decay 0.1 and a learning rate of $1\times 10^{-3}$ but ``rebalancing'' weights every 200 iterations.
\end{itemize}
In each case we train for 5,000 steps and store sets of metrics every 10 steps including generic training metrics, step-specific bias gradient norms, and the query/key weight metrics discussed above ($\rho$, $\beta$, $\tau$, and $\mu$). Case (1) is worth including because it should run entirely orthogonally to unidentified parameter subspaces, by definition of the gradient. However \texttt{nanoGPT} will include numerical errors, dropout, and additional features that could break this result. Case (2) adds weight decay which formally adds a diagonal multiplier into weight updates we have not analyzed. Case (3) should have the best training performance but also has the highest potential to make changes changing unidentified parameters. Case (4) is an experiment to uncover if ``fixing'' AdamW step overlap with unidentified parameter subspaces has any effect on training. In principle the rebalancing we use destroys the higher order moment data so some degradation in training performance should be expected though overall our theory predicts that the effect should be minor.

We also run a sweep of case (1) with various learning rates to sketch out the influence of the related numerical discretization error that would occur. {\em A priori} we would expect larger steps to deviate further from idealistic gradient flow conservation roughly like the learning rate squared and can review results from that point of view. We run this sweep with learning rates $0.5$, $0.25$, $0.125$, and $0.0625$. 

\subsubsection{Rebalancing.}

We should clarify though what our rebalancing step really is to avoid confusion. Note that we could consider this a form of ``quotienting out'' the symmetry, which has a precedent in the literature \citep{zhao2025}. Our process is as follows: When rebalancing we reset 
\begin{align*}
	\vec{M} 
		&= (\vec{W}_K^{(h)})^\top \vec{W}_Q^{(h)}
		= \vec{Q}_K^{(h)} \vec{R}_K^{(h)} (\vec{R}_Q^{(h)})^\top (\vec{Q}_Q^{(h)})^\top
		\\
		&= \vec{Q}_K^{(h)} \vec{U}_M \boldsymbol{\Sigma} \vec{V}_M^\top (\vec{Q}_Q^{(h)})^\top
		= \Big( \boldsymbol{\Sigma}^{1/2} (\vec{Q}_K^{(h)})^\top \vec{U}_M^\top \Big)^\top 
			\Big( \boldsymbol{\Sigma}^{1/2} \vec{V}_M^\top (\vec{Q}_Q^{(h)})^\top \Big)
\end{align*}
taking thin QR decompositions of $\vec{W}_Q^{(h)},\vec{W}_K^{(h)}$ and an SVD for $\vec{R}_K^{(h)} (\vec{R}_Q^{(h)})^\top$. That gives weight resets 
\begin{align*}
	\bar{\vec{W}}_Q^{(h)} = \boldsymbol{\Sigma}^{1/2} \vec{V}_M^\top (\vec{Q}_Q^{(h)})^\top
	\quad\quad
	\bar{\vec{W}}_K^{(h)} = \boldsymbol{\Sigma}^{1/2} (\vec{Q}_K^{(h)})^\top \vec{U}_M^\top
\end{align*}
which provably zeros out the conserved quantity from gradient flow because
\begin{align*}
	\bar{\vec{W}}_Q^{(h)}(\bar{\vec{W}}_Q^{(h)})^\top 
		&= \boldsymbol{\Sigma}^{1/2} \vec{V}_M^\top (\vec{Q}_Q^{(h)})^\top 
				\vec{Q}_Q^{(h)} \vec{V}_M \boldsymbol{\Sigma}^{1/2}
		= \boldsymbol{\Sigma}
	\\
	\bar{\vec{W}}_K^{(h)}(\bar{\vec{W}}_K^{(h)})^\top
		&= \boldsymbol{\Sigma}^{1/2} (\vec{Q}_K^{(h)})^\top \vec{U}_M^\top
				\vec{U}_M \vec{Q}_K^{(h)} \boldsymbol{\Sigma}^{1/2}
		= \boldsymbol{\Sigma}
\end{align*}
Because there are still invariant rotations and permutations (we don't solve for a projection) we also solve
\begin{align*}
	\min_{\vec{Z}^\top\vec{Z}=\vec{I}}
	\; || \vec{Z}\bar{\vec{W}}_Q^{(h)} - \vec{W}_Q^{(h)} ||_F^2
	+ || \vec{Z}\bar{\vec{W}}_K^{(h)} - \vec{W}_K^{(h)} ||_F^2
\end{align*}
with another SVD to get a {\em specific} choice of the update closest to pre-rebalanced step. 

\subsection{Training Results.}

A summary of the results for a single representative run is presented in Fig. \ref{fig:shakespeare-results}. 

\begin{figure}[h]
    \centering
    \includegraphics[width=1.0\textwidth]{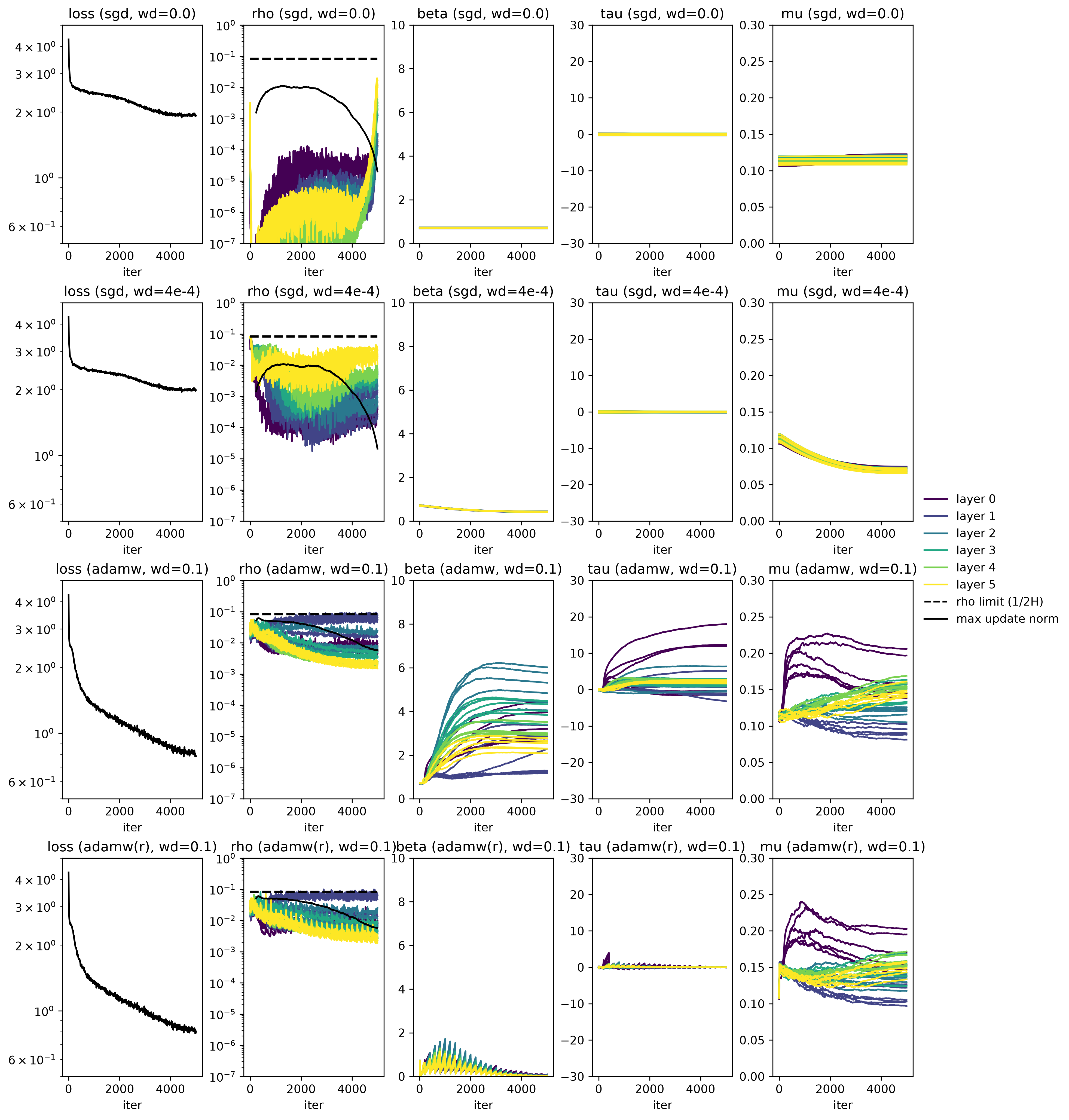}
    \caption{Numerical results from 5000 training iterations on the "tiny shakespeare" datasets using character tokens with \texttt{nanoGPT}. First row: using plain SGD with no weight decay and a 0.25 base learning rate, where unidentified subspaces should be entirely avoided outside of numerical error. Second row: using plain SGD with 0.0004 weight decay matched to a base learning rate 0.25, exhibiting a compressive effect on the query-key trace imbalance discussed in the text. Third row: using AdamW with weight decay, showing in-training effects on both $\beta$ and the trace imbalance from training steps overlapping unidentified subspaces. Even so, AdamW makes the most progress in reducing loss over the iteration range. Fourth row: AdamW with rebalancing.}
    \label{fig:shakespeare-results}
\end{figure}

Note that, in the second column, $\rho$ is plotted along with $1/(2H)$ and the max update norm. The closer $\rho$ is to $1/(2H)$ the more ``unidentified'' actual steps are. We can clearly see pure SGD steps far away from that boundary, decaying weight steps become closer, and AdamW is closest to the boundary with some layer head updates effectively covering the unidentified invariant space. We also show the update norm to clarify ``spikes'' in $\rho$ towards the end of training. The smaller the update norm, the more ill-posed computing $\rho$ becomes, and thus these spikes are explained as a computational artifact. See also Fig. \ref{fig:shakespeare-sweep}. 

The third column shows $\beta$, which should be conserved for gradient flow whose orbits are orthogonal to symmetries creating unidentified parameters. SGD keeps $\beta$ constant (nonzero simply from initialization), SGD with weight decay slowly decreases $\beta$, while AdamW clearly does not keep $\beta$ conserved.

The fourth column shows $\tau$ which is related to a signed $\beta$. For SGD with and without weight decay this is effectively zero, suggestive of a mean-zero initial state for the conserved difference in Gram matrices. The story is entirely different for AdamW. $\tau$ spreads over training, in both positive and negative directions (sign related to queries vs keys), with the largest changes in the earliest layer. 

The fifth column with $\mu$ exists just to show we aren't seeing rank deficiency in these training runs. Rank deficiency would appear as lines dropping to zero, which does not occur. Across all runs this is consistent with our results showing the existence of unidentified parameters when {\em not} paired with rank deficiency. Again $\mu$ is stable for SGD, while AdamW is showing significantly different behavior for this metric of the spectrum of the query and key weights. 

The rebalanced AdamW in the last row proves out the idea that a ``proximal orbit correction'' during training to bring iterates back to a canonical set of weights with respect to attention weight symmetries has no deleterious effect on training. Comparing ``normal'' AdamW to periodic corrections does not change the overall loss but (by construction) regularly zeros out the conserved quantities derived from the unidentified parameter symmetries. Specifically $\beta$ exhibits a sawtooth pattern around the rebalances (with a decaying envelope related to the learning rate) and $\tau$, the signed deviation from conservation using $\mathrm{trace}$, looks much more like plain SGD. As an aside, we ran but do not display the rebalanced AdamW training with optimizer clearing but no weight changes and recovered effectively ``plain'' AdamW behavior. That is, the change in behavior is from the rebalancing, not optimizer state clearing.

Fig. \ref{fig:shakespeare-bias} supports our theoretical results about bias terms. For simplicity, we focus on the query-key biases, as the value-output claims in our theory are both harder to compute (coupled together by weights with output bias) and also may be influenced by dropout in training (while the claims are unaffected by that in forward inference). The conclusion from this figure is simply confirmation: regardless of the optimizer approach the key layer bias has numerically zero gradients, while the query layer bias has small but not likely numerically zero gradients. This is exactly what we claimed to prove.

\begin{figure}[h]
    \centering
    \includegraphics[width=1.0\textwidth]{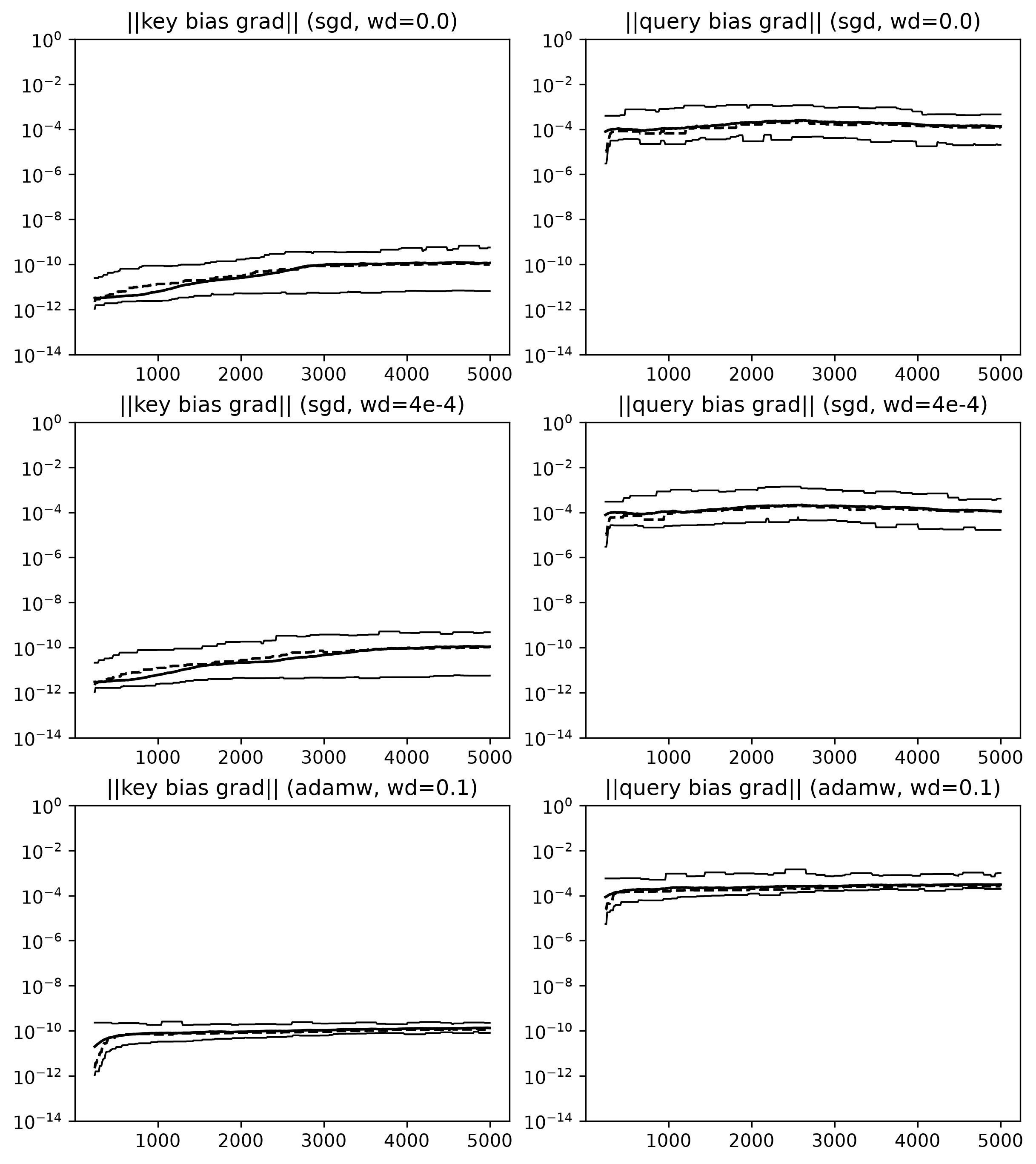}
    \caption{Envelopes (min, max, median, and mean) of the key and query bias for three training runs as in Fig. \ref{fig:shakespeare-results}. Optimizer or settings don't significantly effect the bias gradient norms. Moreover the key bias norm is numerically zero, as required by our theoretical result.}
    \label{fig:shakespeare-bias}
\end{figure}

Results of our sweep of SGD without weight decay at descending learning rates are shown in Fig. \ref{fig:shakespeare-sweep}. One effect we can again see is the spiking of $\rho$ with smaller updates. Another is that ``drift'', the norm of the step-wise difference in the conserved quantity, is larger for earlier layers, stabilizes over training, and for all layers/heads decreases with learning rate. This suggests a story about our simplest optimizer in which discretization error from the learning rate itself is driving lack of conservation. 

\begin{figure}[h]
    \centering
    \includegraphics[width=1.0\textwidth]{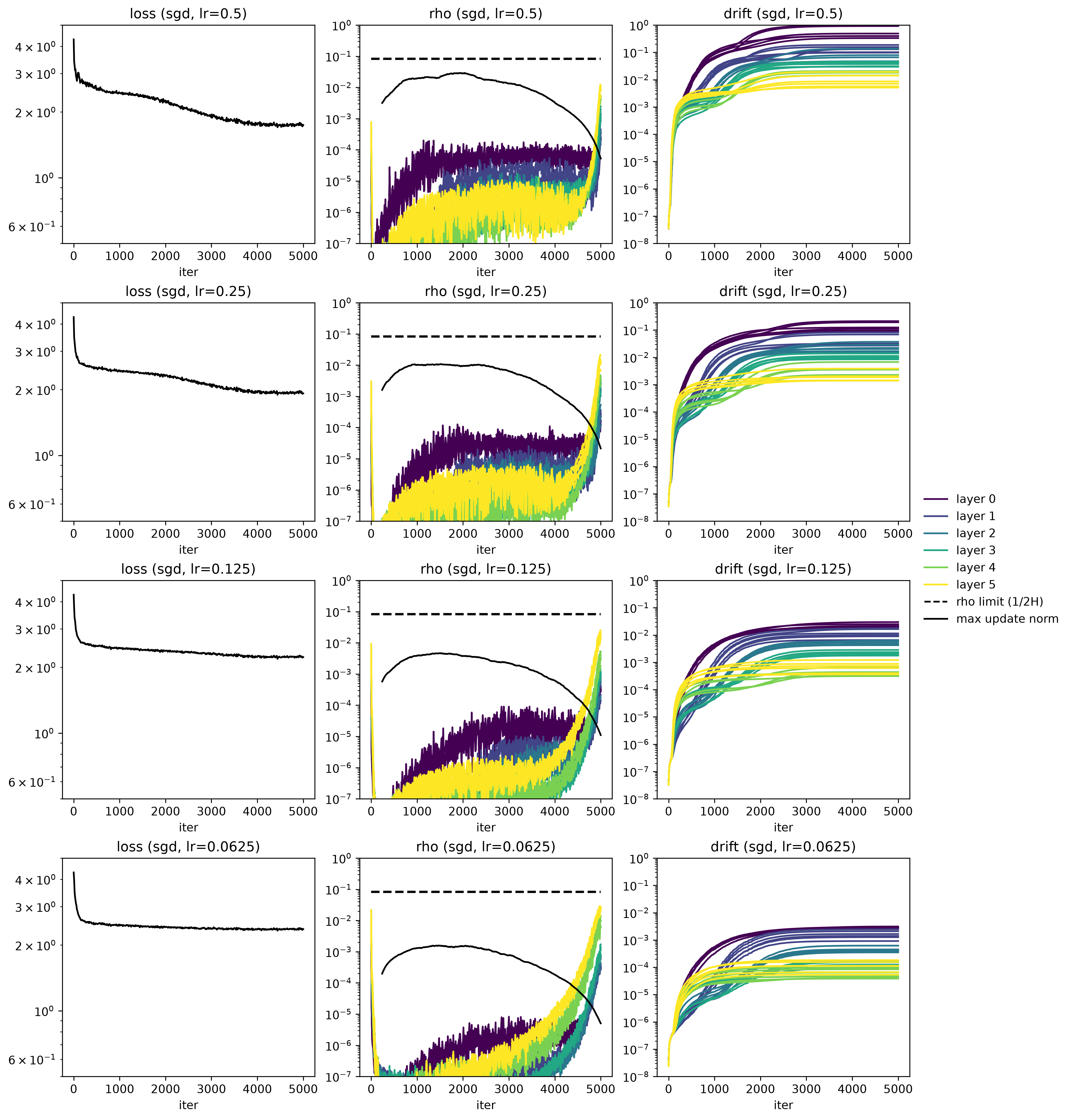}
    \caption{Results for the SGD (no decay) sweep of training runs with descending learning rates. Drift here refers to the conserved quantity drift over a step. There is a steady decrease from higher learning rates to smaller learning rates suggesting influence of discretization error.}
    \label{fig:shakespeare-sweep}
\end{figure}


\section{Conclusions}

The original \citet{vaswani2023} paper introduced transformers and specifically motivated the introduction of multi-head attention as a way to attend to distinct representational subspaces. In this note we have proved, both theoretically and empirically (albeit on a popular small somewhat toy implementation), that introducing additional heads also takes a transformer architecture from $1/2$ of parameters unidentified to $1/(2H)$ (50\% to 80-90\% identified in published architectures). We have also shown matching the identification of the multi-head architecture {\em is not possible} in a naive single-head design. Part of our framing in the latter was to rule out an experiment we could otherwise do, but it has the side-effect of implying that the architectural choice is vital to obtaining the property.

We've generally motivated our work generally by stating ``identification matters'' which is easier said than done at the level of complexity in useful neural networks. However reflecting on the mathematics of attention based layer designs proves this out implicitly for architectures adopted for reasons other than purely statistical well-posedness with identification. Our main result is that using multiple heads implicitly improves identification, but was considered for representationality. RoPE improves it further via commutativity constraints but intends to impose pre-defined structure on the attended tokens suitable for language. GQA is presumably somewhere in between. In other words, while literature and practice has been innovating architectures to change how transformer networks build and weight features they have also changed how many network parameters have ``meaning'' in a basic statistical sense. This is a basic and seemingly unacknowledged confound for understanding performance improvement.

In the process this note has provided buttressing evidence for some other aspects of the study of transformers. We have strengthened the dimensionality of invariance \citep{zhang2025}, shown that in the specific case of multi-head attention the architectural influence on identification is nontrivial, and that the conventional choice ($D = E/H$ where $H$ divides $E$) is likely a very good tradeoff between representationality, efficiency, and identification \citep{bhojanapalli2020}. 


\section*{Declaration of Generative AI and AI-assisted technologies in the writing process}

In the preparation of this note the author(s) used Anthropic's Claude (Opus 5, medium thinking) to prepare a formal referee-style review, suggest targeted related literature, provide suggestions for metrics and plots in the numerical experiements, and simplify the coding process for the numerical experiments. No direct text written by Claude was used in this note, and after using this tool, the author reviewed, edited, and re-derived the content as needed for both substance and style and takes full responsibility for the content.

\printbibliography[title=References]

\end{document}